\newtheorem{corollary}{Corollary}
\newtheorem{problem*}{Problem}
\newtheorem{property}{Property}
\newtheorem{theorem}{Theorem}
\newtheorem{lemma}{Lemma}
\newtheorem{definition}{Definition}
\newtheorem{example}{Example}
\newtheorem*{example*}{Example}
\newcommand{\nosemic}{\renewcommand{\@endalgocfline}{\relax}}
\newcommand{\dosemic}{\renewcommand{\@endalgocfline}{\algocf@endline}}
\let\oldnl\nl
\newcommand{\nonl}{\renewcommand{\nl}{\let\nl\oldnl}}
\newcommand{\Lap}{\textrm{Lap}}
\DeclareMathOperator*{\argmin}{argmin}
\newcommand*{\prob}{\mathsf{P}}
\newcommand\independent{\protect\mathpalette{\protect\independenT}{\perp}}
\def\independenT#1#2{\mathrel{\rlap{$#1#2$}\mkern2mu{#1#2}}}
\newcommand{\nando}[1]{\todo[inline,caption={},color=cyan!20!]{
\begin{spacing}{0.6}
\vspace{-2pt}
{\footnotesize FF: #1}
\vspace{-8pt}
\end{spacing}}}
\newcommand{\rev}[1]{{\color{purple}{#1}}}
\newcommand*{\defeq}{\stackrel{\text{def}}{=}}
\newcommand{\cM}{\mathcal{M}} 
\newcommand{\cR}{\mathcal{R}}
 \newcommand{\cX}{\mathcal{X}}
\newcommand{\bx}{\bm{x}}
\newcommand{\EE}{\mathbb{E}} \newcommand{\RR}{\mathbb{R}}
\newcommand{\NN}{\mathbb{N}}
\newcommand{\var}{\mathrm{Var}}
\DeclareMathOperator{\Tr}{Tr}
\DeclarePairedDelimiter\floor{\lfloor}{\rfloor}
\title{Decision Making with Differential Privacy under a Fairness Lens}
\author{%
  Ferdinando Fioretto \\
  {\small Syracuse University} \\
  \texttt{ffiorett@syr.edu}\\
  \And
  Cuong Tran\\
  {\small Syracuse University}\\
  \texttt{cutran@syr.edu}\\
  \And
  Pascal Van Hentenryck\\
  {\small Georgia Institute of Technology}\\
  \texttt{pvh@isye.gatech.edu}
}
\begin{document}
\maketitle\sloppy\allowdisplaybreaks

\begin{abstract}
Agencies, such as the U.S. Census Bureau, release data sets and
statistics about groups of individuals that are used as input to a
number of critical decision processes. To conform with privacy and
confidentiality requirements, these agencies are often required to
release privacy-preserving versions of the data. This paper studies
the release of differentially private data sets and analyzes
their impact on some critical resource allocation tasks under a
fairness perspective.  {The paper shows that, when the decisions take
as input differentially private data}, the noise added to achieve
privacy disproportionately impacts some groups over others. The paper
analyzes the reasons for these disproportionate impacts and
proposes guidelines to mitigate these effects. The proposed approaches 
are evaluated on critical decision problems that use differentially 
private census data.
\end{abstract}

\section{Introduction}
\label{sec:Introduction}
Many agencies or companies release statistics about groups of
individuals that are often used as inputs to critical decision
processes.  The U.S.~Census Bureau, for example, releases data that is then used to allocate funds and distribute critical resources to
states and jurisdictions. These decision processes may determine whether a jurisdiction must provide
language assistance during elections, establish distribution plans of
COVID-19 vaccines for states and jurisdictions \cite{covid}, and
allocate funds to school districts \cite{pujol:20,Fioretto:AIJ21}. The resulting 
decisions may have significant societal, economic, and  medical impacts for participating individuals. 

In many cases, the released data contain sensitive information whose
privacy is strictly regulated.  For example, in the U.S., the census
data is regulated under Title 13 \cite{title13}, which requires that
no individual be identified from any data released by the Census
Bureau. In Europe, data release is regulated according to
the \emph{General Data Protection Regulation}
\cite{GDPR}, which addresses the control and transfer of personal data.
As a result, such data releases must necessarily rely on
privacy-preserving technologies. Differential Privacy
(DP) \cite{dwork:06} has become the paradigm of choice for protecting
data privacy, and its deployments have been growing rapidly in the
last decade. These include several data products related to the 2020 release of the
US.~Census Bureau \cite{abowd2018us}, Apple \cite{apple},
Google \cite{erlingsson2014rappor}, and Uber \cite{uber}, and LinkedIn \cite{rogers2020linkedin}.

Although DP provides strong privacy guarantees on the released
data, it has become apparent recently that {\em differential privacy 
may induce biases and fairness issues in downstream decision
processes}, as shown empirically by Pujol et al.~\cite{pujol:20}. Since at least \$675 billion 
are being allocated based on U.S.~census data \cite{pujol:20}, the use of 
differential privacy without a proper understanding of these biases and 
fairness issues may adversely affect the health, well-being, and sense 
of belonging of many individuals. 
Indeed, the allotment of federal funds, apportionment of congressional 
seats, and distribution of vaccines and therapeutics should ideally 
be fair and unbiased. 
Similar issues arise in several other areas including, for instance, 
election, energy, and food policies. The problem is further exacerbated by 
the recent recognition that {\em commonly adopted differential privacy 
mechanisms for data release tasks may in fact introduce unexpected biases 
on their own, independently of a downstream decision process \cite{Zhu:AAAI21}.}


This paper builds on these empirical observations and provides a 
step towards a deeper understanding of the fairness issues arising when
differentially private data is used as input to several resource
allocation problems.  {\em One of its main results is to prove that
  several allotment problems and decision rules with significant
  societal impact (e.g., the allocation of educational funds, the
  decision to provide minority language assistance on election
  ballots, or the distribution of COVID-19 vaccines) exhibit inherent
  unfairness when applied to a differentially private release of the census
  data.} To counteract this negative results, the paper examines the
conditions under which decision making is fair when using differential
privacy, and techniques to bound unfairness. The paper also provides a
number of mitigation approaches to alleviate biases introduced by
differential privacy on such decision making problems. More
specifically, the paper makes the following contributions:
\begin{enumerate}[leftmargin=*,labelsep=2pt,itemsep=0pt,parsep=2pt,topsep=2pt]

\item It formally defines notions of fairness and bounded fairness for decision making
  subject to privacy requirements. 

\item It characterizes decision making problems that are fair or
  admits bounded fairness. In addition, it investigates the
  composition of decision rules and how they impact bounded fairness.

\item It proves that several decision problems with high societal impact
  induce inherent biases when using a differentially private input. 

\item It examines the roots of the induced unfairness by analyzing the structure
  of the decision making problems. 

\item It proposes several guidelines to mitigate the negative
  fairness effects of the decision problems studied.  
\end{enumerate}

To the best of the authors' knowledge, this is the first study that
attempt at characterizing the relation between differential privacy
and fairness in decision problems. 

\section{Preliminaries: Differential Privacy}

\emph{Differential Privacy} \cite{dwork:06} (DP) is a rigorous privacy notion that characterizes the amount of information of an individual's data being disclosed in a computation.

\begin{definition}
  A randomized algorithm $\cM:\cX \to \cR$ with domain $\cX$ and range $\cR$ satisfies $\epsilon$-\emph{differential privacy} if
  for any output $O \subseteq \cR$ and data sets $\bm{x}, \bm{x}' \in \cX$ differing by at most one entry (written $\bm{x} \sim \bm{x}'$) 
  \begin{equation}
  \label{eq:dp}
    \Pr[\cM(\bm{x}) \in O] \leq \exp(\epsilon) \Pr[\cM(\bm{x}') \in O]. 
  \end{equation}
\end{definition}

\noindent 
Parameter $\epsilon \!>\! 0$ is the \emph{privacy loss}, with values close 
to $0$ denoting strong privacy. Intuitively, differential privacy states that 
any event occur with similar probability regardless of the participation
of any individual data to the data set. 
Differential privacy satisfies several properties including 
\emph{composition}, which allows to bound the privacy loss derived by 
multiple applications of DP algorithms to the same dataset, and 
\emph{immunity to post-processing}, which states that the privacy 
loss of DP outputs is not affected by arbitrary data-independent 
post-processing \cite{Dwork:13}.

A function $f$ from a data set $\bm{x} \in \cX$ to a result set 
$R \subseteq \RR^n$ can be made differentially private by injecting 
random noise onto its output. The amount of noise relies on the notion 
of \emph{global sensitivity} 
\(
\Delta_f = \max_{\bm{x} \sim \bm{x}'} \| f(\bm{x}) - f(\bm{x}') \|_1
\), which quantifies the effect of changing an individuals' data
to the output of function $f$. 
The \emph{Laplace mechanism} \cite{dwork:06} that outputs $f(\bm{x}) + 
\bm{\eta}$, where $\bm{\eta} \in \RR^n$ is drawn from the i.i.d.~Laplace 
distribution with $0$ mean and scale $\nicefrac{\Delta_f}{\epsilon}$ over 
$n$ dimensions, achieves $\epsilon$-DP.

Differential privacy satisfies several important properties. Notably, 
\emph{composability} ensures that a combination of DP mechanisms preserve 
differential privacy.
\begin{theorem}[Sequential Composition]
\label{th:seq_composition}
The composition $(\cM_1(\bm{x}), \ldots, \cM_k(\bm{x}))$ of a collection
$\{\cM_i\}_{i=1}^k$ of $\epsilon_i$-differentially private mechanisms
satisfies $(\epsilon=\sum_{i=1}^{k} \epsilon_i)$-differential privacy.
\end{theorem}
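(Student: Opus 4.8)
The plan is to reduce the general statement to the case of two mechanisms by induction on $k$, and then prove the two-mechanism case by a direct pointwise comparison of output probabilities that exploits the independence of the random coins used by the $\cM_i$.

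First I would set up the induction. Assuming $(\cM_1(\bm{x}),\dots,\cM_{k-1}(\bm{x}))$ is $\big(\sum_{i=1}^{k-1}\epsilon_i\big)$-differentially private, I view the tuple $(\cM_1(\bm{x}),\dots,\cM_k(\bm{x}))$ as the composition of the two mechanisms $\cN_1 = (\cM_1,\dots,\cM_{k-1})$ and $\cN_2 = \cM_k$, which are private with parameters $\sum_{i=1}^{k-1}\epsilon_i$ and $\epsilon_k$ respectively. Hence it suffices to show that composing an $\epsilon$-DP mechanism with an $\epsilon'$-DP mechanism (run on independent randomness) yields an $(\epsilon+\epsilon')$-DP mechanism; the claimed bound $\epsilon=\sum_{i=1}^k\epsilon_i$ then follows by unrolling the induction.

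For the base case I would first handle a discrete range. Fix neighboring data sets $\bm{x}\sim\bm{x}'$ and an output pair $(o_1,o_2)$. Since $\cM_1$ and $\cM_2$ draw independent randomness, $\Pr[(\cM_1(\bm{x}),\cM_2(\bm{x}))=(o_1,o_2)] = \Pr[\cM_1(\bm{x})=o_1]\,\Pr[\cM_2(\bm{x})=o_2]$, and likewise for $\bm{x}'$. Taking the ratio and applying the $\epsilon_1$-DP guarantee of $\cM_1$ to the first factor and the $\epsilon_2$-DP guarantee of $\cM_2$ to the second gives a pointwise bound of $\exp(\epsilon_1)\exp(\epsilon_2)=\exp(\epsilon_1+\epsilon_2)$ on the ratio for every atom $(o_1,o_2)$. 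Summing over all atoms contained in an arbitrary event $O\subseteq\cR_1\times\cR_2$ then yields $\Pr[(\cM_1(\bm{x}),\cM_2(\bm{x}))\in O]\le\exp(\epsilon_1+\epsilon_2)\,\Pr[(\cM_1(\bm{x}'),\cM_2(\bm{x}'))\in O]$, which is exactly $(\epsilon_1+\epsilon_2)$-differential privacy. For a general (continuous) range I would replace sums by integrals: factor the joint density of the output as a product of the two marginal densities (again by independence), bound the ratio of densities pointwise by $\exp(\epsilon_1+\epsilon_2)$ using the DP guarantees at the level of densities, and integrate over an arbitrary measurable $O$ to reach the same inequality.

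The step I expect to require the most care is the measure-theoretic bookkeeping in the continuous case: one must ensure the output distributions admit a common dominating measure so that the relevant Radon–Nikodym derivatives exist, and that the independence of the mechanisms genuinely lets the joint density factor. Once that is in place, the heart of the argument is just the elementary product inequality $\prod_{i} \exp(\epsilon_i) = \exp\!\big(\sum_i \epsilon_i\big)$, so the conceptual content is entirely in the reduction and the factorization, not in any delicate estimate.
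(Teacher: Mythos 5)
The paper does not actually prove this theorem: it is quoted as a standard result from the differential privacy literature (Dwork et al.), so there is no in-paper argument to compare against. Your proposal is the standard textbook proof --- induction to the two-mechanism case, factorization of the joint output distribution via independence of the coins, the pointwise bound $\exp(\epsilon_1)\exp(\epsilon_2)=\exp(\epsilon_1+\epsilon_2)$, and summation (or integration) over the event $O$ --- and it is correct for the non-adaptive composition stated here, where each $\cM_i$ is applied to $\bm{x}$ directly. One remark: the stronger and more commonly used form of this theorem allows $\cM_i$ to depend adaptively on the outputs of $\cM_1,\dots,\cM_{i-1}$; that version cannot rely on independence of the marginal output distributions and instead conditions on the earlier outputs, bounding the conditional density ratio of each stage by $\exp(\epsilon_i)$ before multiplying. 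Your independence-based factorization is a special case of that argument, and your flagged concern about a common dominating measure is handled the same way in both versions.
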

\noindent
The parameter $\epsilon$ resulting in the composition of different
mechanism is referred to as \emph{privacy budget}.  Stronger
composition results exists \cite{kairouz2015composition} but are beyond 
the need of this paper.
\emph{Post-processing immunity} ensures that privacy guarantees are
preserved by arbitrary post-processing steps. 
\begin{theorem}[Post-Processing Immunity] 
\label{th:postprocessing} 
Let $\cM$ be an $\epsilon$-differentially private mechanism and $g$ be
an arbitrary mapping from the set of possible output sequences to an
arbitrary set. Then, $g \circ \cM$ is $\epsilon$-differentially private.
\end{theorem}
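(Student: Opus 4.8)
The plan is to reduce the claim to the case of a \emph{deterministic} post-processing map and then apply the definition of differential privacy directly to a preimage set. First I would argue that it suffices to treat a deterministic function $g : \cR \to \cR'$: an arbitrary randomized post-processing can be written as a mixture of deterministic ones, i.e., there is a family $\{g_t\}_{t \in \cT}$ of deterministic maps together with a probability distribution $\mu$ over $\cT$, independent of the input to $\cM$, such that $g \circ \cM$ has the same output law as the two-stage process ``draw $t \sim \mu$, then output $g_t(\cM(\cdot))$''. Granting that each $g_t \circ \cM$ is $\epsilon$-DP, one integrates the pointwise inequality $\Pr[(g_t \circ \cM)(\bm{x}) \in S] \le \exp(\epsilon)\,\Pr[(g_t \circ \cM)(\bm{x}') \in S]$ against $\mu$ — valid by Fubini--Tonelli since the integrands are nonnegative and bounded — to obtain the same two-sided bound for $g \circ \cM$.

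For the deterministic case, I would fix adjacent data sets $\bm{x} \sim \bm{x}'$ and an arbitrary (measurable) event $S \subseteq \cR'$, and set $T = g^{-1}(S) = \{ r \in \cR : g(r) \in S \} \subseteq \cR$. The crux is the elementary equivalence $g(\cM(\bm{x})) \in S \iff \cM(\bm{x}) \in T$, so that the events $\{ g(\cM(\bm{x})) \in S \}$ and $\{ \cM(\bm{x}) \in T \}$ coincide (and likewise with $\bm{x}'$); hence $\Pr[(g \circ \cM)(\bm{x}) \in S] = \Pr[\cM(\bm{x}) \in T]$. Applying the $\epsilon$-DP guarantee of $\cM$ to the event $T$ then chains these identities into
\[
\Pr[(g \circ \cM)(\bm{x}) \in S] = \Pr[\cM(\bm{x}) \in T] \le \exp(\epsilon)\,\Pr[\cM(\bm{x}') \in T] = \exp(\epsilon)\,\Pr[(g \circ \cM)(\bm{x}') \in S].
\]
Since $S$ was arbitrary, $g \circ \cM$ satisfies $\epsilon$-differential privacy.

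The main point requiring care is bookkeeping rather than any substantive idea: one needs $T$ to be a genuine event, i.e., $g$ measurable so that $g^{-1}(S)$ lies in the $\sigma$-algebra on $\cR$ on which the output distribution of $\cM$ is defined, and one needs the mixture representation together with the interchange of integration in the randomized reduction to be legitimate. For the output spaces relevant here (discrete ranges, or standard Borel spaces) these conditions hold automatically, so the reduction and the preimage argument combine to give the statement in full generality.
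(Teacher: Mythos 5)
Your proof is correct and is the canonical argument (preimage reduction for deterministic $g$, plus a mixture-over-deterministic-maps reduction for randomized post-processing); the paper itself states this theorem as cited background from the differential-privacy literature and offers no proof of its own, so there is nothing to diverge from. The measurability caveat you flag is the right one to mention and is indeed harmless in the discrete/Borel settings used here.
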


\section{Problem Setting and Goals}
\label{sec:setting}

The paper considers a dataset $\bm{x} \!\in\! \cX \subseteq \RR^k$ of $n$ entities, 
whose elements $x_i= (x_{i1},\ldots, x_{1k})$ describe $k$ measurable 
quantities of entity $i \!\in\! [n]$, such as the number of individuals living 
in a geographical region $i$ and their English proficiency. 
The paper considers two classes of problems: 
\begin{itemize}[leftmargin=*,labelsep=2pt,itemsep=0pt,parsep=2pt,topsep=2pt]
\item An \emph{allotment problem} $P : \cX \times [n] \to \mathbb{R}$ is a function that distributes a finite set of resources to some problem entity. $P$ may represent, for instance, the amount of money allotted to a school district. 
\item A \emph{decision rule} $P: \cX \times [n] \to \{0,1\}$
  determines whether some entity qualifies for some benefits.  For
  instance, $P$ may represent if election ballots should be described
  in a minority language for an electoral district.
\end{itemize}
The paper assumes that $P$ has bounded range, and uses the shorthand
$P_i(\bm{x})$ to denote $P(\bm{x}, i)$ for entity $i$.
The focus of the paper is to study the effects of a DP data-release
mechanism $\cM$ to the outcomes of problem $P$. Mechanism $\cM$ is
applied to the dataset $\bm{x}$ to produce a privacy-preserving
counterpart $\tilde{\bm{x}}$ and the resulting private outcome
$P_i(\tilde{\bm{x}})$ is used to make some allocation decisions.
\begin{figure}[!t]
\centering
\includegraphics[width=0.7\columnwidth]{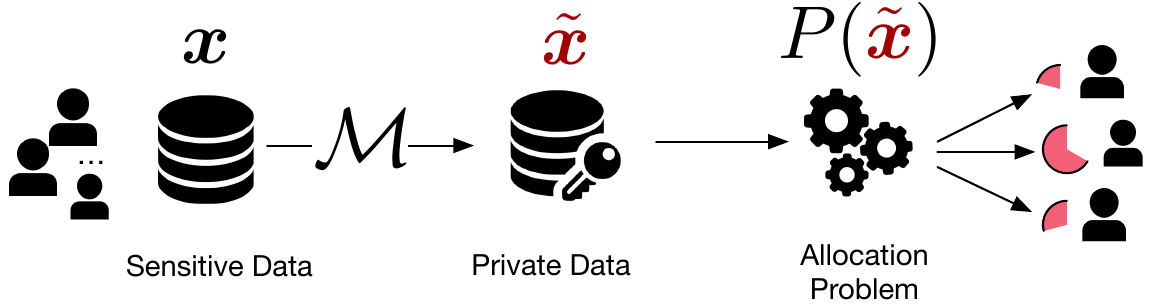}
\caption{Diagram of the private allocation problem.}
\label{fig:framework}
\end{figure}
Figure \ref{fig:framework} provides an illustrative diagram. 

Because random noise is added to the original dataset $\bm{x}$, the
output $P_i(\tilde{\bm{x}})$ incurs some error. {\em The focus of this paper is to characterize and quantify the disparate impact of this
  error among the problem entities}. In particular, the paper focuses
on measuring the bias of problem $P_i$
 \begin{equation}
 \label{eq:bias}
  B_P^i(\cM, \bm{x}) = 
  \EE_{\tilde{\bm{x}} \sim \cM(\bm{x})} \left[ P_i(\tilde{\bm{x}}) \right] - P_i (\bm{x}),
 \end{equation}
which characterizes the distance between the expected
privacy-preserving allocation and the one based on the ground truth.
The paper considers the absolute bias $|B_P^i|$, in place of the bias
$B_P^i$, when $P$ is a decision rule. The distinction will become
clear in the next sections.

The results in the paper assume that $\cM$, used to release counts, 
is the Laplace mechanism with an appropriate finite sensitivity $\Delta$.
\emph{However, the results are general and apply to any data-release
  DP mechanism that add unbiased noise}.

\section{Motivating Problems} 
\label{sec:motivating_examples}
This section introduces two Census-motivated problem classes that grant 
benefits or privileges to groups of people. The problems were first 
introduced in \cite{pujol:20}.

\paragraph{Allotment problems}
\newcommand{\tfa}{P^F}

The \emph{Title I of the Elementary and Secondary Education Act of
  1965} \cite{Sonnenberg:16} distributes about \$6.5 billion through
basic grants. The federal allotment is divided among qualifying school
districts in proportion to the count $x_i$ of children aged 5 to 17
who live in necessitous families 
in district $i$. The allocation is formalized by
\begin{align*}
\label{eq:allotment}
  \tfa_i(\bm{x}) \defeq \left( 
    \frac{x_i \cdot a_i}{\sum_{i \in [n] }x_i \cdot a_i}\right),
\end{align*}
where $\bm{x} = (x_i)_{i\in[n]}$ is the vector of all districts counts 
and $a_i$ is a weight factor reflecting students expenditures. 

\begin{wrapfigure}[12]{r}{215pt}
\vspace{-0pt}
\centering
\includegraphics[width=0.97\linewidth]{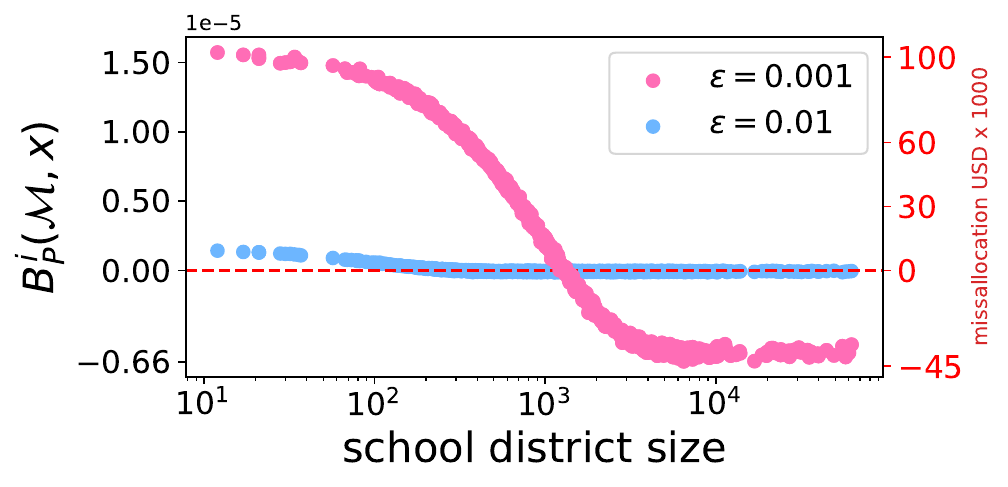}
{\caption{Disproportionate Title 1 funds allotment in NY school 
districts.}\label{fig:p1motivation}}
\end{wrapfigure}
Figure \ref{fig:p1motivation} illustrates the expected disparity errors arising when using private data as input to problem $\tfa$, for various
privacy losses $\epsilon$. These errors are expressed in terms of bias 
(left y-axis) and USD misallocation (right y-axis) across the different 
New York school districts, ordered by their size. 
The allotments for small districts are typically overestimated while
those for large districts are underestimated. Translated in economic
factors, some school districts may receive up to 42,000 dollars less
than warranted.

\paragraph{Decision Rules}
\emph{Minority language voting right benefits} are granted to qualifying voting jurisdictions. The problem is formalized as
\begin{equation*}
\label{p:coverage}
P^M_i(\bm{x}) \! \defeq \! \left(
\frac{x_i^{sp}}{x_i^{s}} \!>\! 0.05 \!\lor\! x_i^{sp} \!>\! 10^4
\right) \!\land\! \frac{x_{i}^{spe}}{x_{i}^{sp}} \!>\! 0.0131.
\end{equation*}
\begin{wrapfigure}[13]{r}{215pt}
\vspace{-10pt}
\centering
\includegraphics[width=0.90\linewidth]{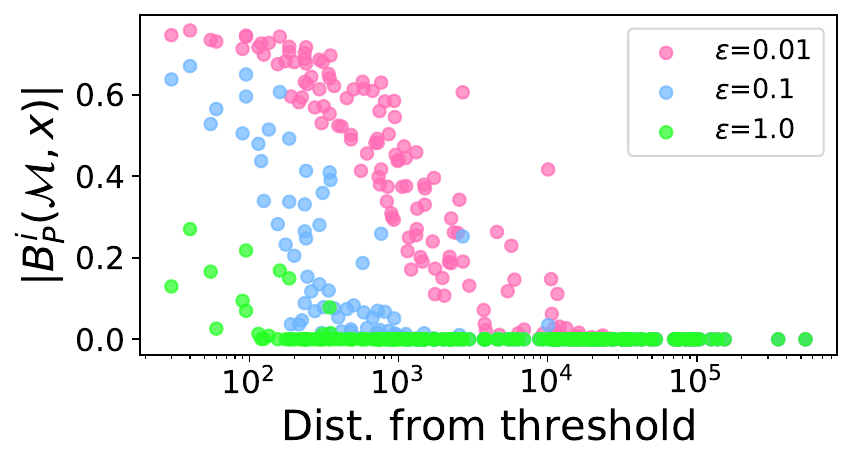}
{\caption{Disproportionate Minority Language Voting Benefits.}\label{fig:p2motivation}}
\end{wrapfigure}
For a jurisdiction $i$, $x_{i}^{s}$, $x_{i}^{sp}$, and $x_{i}^{spe}$
denote, respectively, the number of people in $i$ speaking the
minority language of interest, those that have also a limited English
proficiency, and those that, in addition, have less than a $5^{th}$
grade education. Jurisdiction $i$ must provide language assistance
(including voter registration and ballots) \emph{iff} $P_i^M(\bm{x})$
is \emph{True}.

Figure \ref{fig:p2motivation} illustrates the decision error (y-axis),
corresponding to the absolute bias $|B^i_{{P^M}}(\cM, \bm{x})|$, for 
sorted $x_i^s$, considering only true positives\footnote{This is 
because misclassification, in this case, implies potentially 
disenfranchising a group of individuals.} for the \emph{Hispanic} language.
The figure shows that there are significant disparities in decision 
errors and that these errors strongly correlate to their distance to 
the thresholds. These issues were also observed in \cite{pujol:20}.

\section{Fair Allotments and Decision Rules} 
\label{sec:fair_allocation_and_decision_rules}

This section analyzes the fairness impact in allotment problems and
decision rules. The adopted fairness concept captures the desire of
equalizing the allocation errors among entities, which is of paramount
importance given the critical societal and economic impact of the
motivating applications.

\begin{definition}
\label{def:bias}
A data-release mechanism $\cM$ is said fair w.r.t.~a problem $P$ if,
for all datasets $\bm{x} \in \cX$,
\[ 
  	B_P^i(\cM, \bm{x}) = B_P^j(\cM, \bm{x})\quad \forall i,j \in [n].
\]
\end{definition}
\noindent
That is, $P$ does not induce disproportionate errors when taking as
input a DP dataset generated by $\cM$. The paper also introduces a
notion to quantify and bound the mechanism unfairness.

\begin{definition}
\label{def:fairness}
A mechanism $\cM$ is said $\alpha$-fair w.r.t.~problem $P$ if, for all
datasets $\bm{x} \in \cX$ and all $i \in [n]$,
\[ 
  \xi_B^i(P, \mathcal{M}, \bm{x}) = \max_{j \in [n]}
  \left|B_P^i(\cM, \bm{x}) - B_P^j(\cM, \bm{x})\right|  \leq \alpha,
\]
where $\xi_B^i$ is referred to as the \emph{disparity error} of entity $i$. 
\end{definition}
\noindent
Parameter $\alpha$ is called the \emph{fairness bound} and captures
the fairness violation, with values close to $0$ denoting strong
fairness. A fair mechanism is also $0$-fair.

Note that computing the fairness bound $\alpha$ analytically may not be
feasible for some problem classes, since it may involve computing the 
expectation of complex functions $P$. Therefore, in the analytical 
assessments, the paper recurs to a sampling approach to compute 
the \emph{empirical expectation} $\hat{E}[P_i(\tilde{\bm{x}})] = \frac{1}{m} \sum_{j\in[m]}P_i(\tilde{\bm{x}}^j)$
in place of the true expectation in Equation~\eqref{eq:bias}. 
Therein, $m$ is a sufficiently large sample size and $\tilde{\bm{x}}^j$ 
is the $j$-th outcome of the application of mechanism $\cM$ on data set $\bm{x}$.
\subsection{Fair Allotments: Characterization}

The first result characterizes a sufficient condition for the
allotment problems to achieve finite fairness violations.
The presentation uses $\bm{H}P_i$ to denote the Hessian of problem $P_i$, 
and $\Tr(\cdot)$ to denote the trace of a matrix. In this context, the
Hessian entries are functions receiving a dataset as input. The
presentation thus uses 
$(\bm{H}P_i)_{j,l}(\bm{x})$ and
$\Tr(\bm{H}P_i)(\bm{x})$ to denote the
application of the second partial derivatives of $P_i$ and of 
the \emph{Hessian trace function} on dataset $\bm{x}$.
    

\begin{theorem}
\label{lem:fair_bound_allottments}
Let $P$ be an allotment problem that is at least twice differentiable.
A data-release mechanism $\cM$ is $\alpha$-fair w.r.t.~$P$, for some 
finite $\alpha$, if for all datasets $\bm{x} \in \cX$ 
the entries of the Hessian $\bm{H}P_{i}$ of problem $P_i$ are a constant 
function, that is, if there exists $c^i_{jl} \in \mathbb{R} \; (i \in [n], j,l \in [k])$ such that, 
\begin{equation}
\label{eqn:thm3}
  (\bm{H}P_i)_{j,l}(\bm{x}) = c^i_{j,l}   \;\; (i\in[n]\; j,l\in[k]).
\end{equation}
\end{theorem}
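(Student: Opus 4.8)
The plan is to use the constant-Hessian hypothesis to pin down the exact functional form of each $P_i$, and then to show that the bias $B^i_P(\cM,\bm{x})$ does not depend on the dataset $\bm{x}$ at all, from which a finite fairness bound follows immediately. Write $\tilde{\bm{x}} = \bm{x} + \bm{\eta}$, where $\bm{\eta}$ is the noise vector injected by $\cM$; since the mechanism adds unbiased noise, $\EE[\bm{\eta}] = \bm{0}$, and for the Laplace mechanism the coordinates are i.i.d.\ of scale $\Delta/\epsilon$, so $\EE[\eta_j\eta_l]$ equals $2(\Delta/\epsilon)^2$ for $j=l$ and $0$ otherwise. In general I only use $\EE[\bm{\eta}] = \bm{0}$ together with the fact that the covariance $\Sigma \defeq \cov(\bm{\eta})$ is finite (which holds for the Laplace mechanism).

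Next I would invoke the second-order Taylor expansion of $P_i$ with Lagrange remainder, valid because $P_i$ is twice differentiable: for every $\bm{x}$ and every realization $\bm{\eta}$ there is a point $\bm{\xi}$ on the segment from $\bm{x}$ to $\bm{x}+\bm{\eta}$ with
\[
  P_i(\bm{x}+\bm{\eta}) = P_i(\bm{x}) + \nabla P_i(\bm{x})^\top \bm{\eta} + \tfrac12\, \bm{\eta}^\top (\bm{H}P_i)(\bm{\xi})\, \bm{\eta}.
\]
By the hypothesis \eqref{eqn:thm3}, $(\bm{H}P_i)(\bm{\xi})$ is the constant matrix $C^i$ with entries $C^i_{j,l} = c^i_{j,l}$, independent of $\bm{\xi}$; equivalently, $P_i$ is a quadratic (one can also see this by integrating $\nabla P_i$ along rays from an anchor point, giving $P_i(\bm{x}) = P_i(\bm{0}) + \nabla P_i(\bm{0})^\top\bm{x} + \tfrac12 \bm{x}^\top C^i\bm{x}$ exactly). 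Taking the expectation over $\bm{\eta}$ — legitimate since $P$ has bounded range and $\bm{\eta}$ has finite second moments — the linear term vanishes and
\[
  B_P^i(\cM,\bm{x}) = \EE_{\tilde{\bm{x}}\sim\cM(\bm{x})}\!\left[P_i(\tilde{\bm{x}})\right] - P_i(\bm{x}) = \tfrac12\,\EE\!\left[\bm{\eta}^\top C^i \bm{\eta}\right] = \tfrac12 \sum_{j,l\in[k]} c^i_{j,l}\,\Sigma_{j,l} = \tfrac12\,\Tr(C^i\Sigma),
\]
which for the Laplace mechanism equals $(\Delta/\epsilon)^2 \sum_{j\in[k]} c^i_{j,j} = (\Delta/\epsilon)^2\,\Tr(\bm{H}P_i)$.

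The key observation is that the right-hand side does not depend on $\bm{x}$: writing $\beta_i \defeq \tfrac12\Tr(C^i\Sigma)$, we have $B_P^i(\cM,\bm{x}) \equiv \beta_i$ for all $\bm{x} \in \cX$. Hence the disparity error $\xi_B^i(P,\cM,\bm{x}) = \max_{j\in[n]} |\beta_i - \beta_j|$ is also constant over $\cX$, and $\cM$ is $\alpha$-fair w.r.t.\ $P$ with
\[
  \alpha \defeq \max_{i,j\in[n]} |\beta_i - \beta_j| = \left(\tfrac{\Delta}{\epsilon}\right)^2 \max_{i,j\in[n]} \left|\Tr(\bm{H}P_i) - \Tr(\bm{H}P_j)\right|,
\]
which is finite since each trace is a finite sum of real constants. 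The only genuinely delicate step is the passage from twice-differentiability to the exact quadratic form (and the accompanying exactness of the Taylor remainder); everything else is a direct moment computation.
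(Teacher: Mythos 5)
Your proof is correct and follows the same basic route as the paper --- Taylor-expand $P_i(\bm{x}+\bm{\eta})$ to second order, kill the linear term with $\EE[\bm{\eta}]=\bm{0}$ and the cross terms with independence, and reduce the bias to a trace of the Hessian weighted by the noise's second moments --- but you add one genuinely valuable refinement. The paper's derivation treats the second-order expansion as an \emph{approximation} (its chain of equalities starts with an ``$\approx$'' and never discharges the remainder), whereas you observe that the constant-Hessian hypothesis forces $P_i$ to be exactly quadratic, so the expansion is exact and $B_P^i(\cM,\bm{x})=\tfrac12\Tr(C^i\Sigma)$ holds as an identity, independent of $\bm{x}$. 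This closes the only real gap in the paper's argument and also yields the cleaner, correctly normalized constant $\tfrac12\sum_{j,l}c^i_{j,l}\Sigma_{j,l}$ (the paper's displayed $\tfrac12 n \var[\eta]\Tr(\bm{H}P_i)$ carries a spurious factor of $n$ from an index slip in its Eq.~(8e)). Two small caveats: your appeal to ``$P$ has bounded range'' to justify taking expectations sits awkwardly with $P_i$ being a genuine (non-affine) quadratic on $\RR^k$ --- integrability really comes from the finite second moments of $\bm{\eta}$, which you already have; and the step from ``constant Hessian on $\cX$'' to ``$P_i$ quadratic along the segment from $\bm{x}$ to $\bm{x}+\bm{\eta}$'' implicitly requires the Hessian to be constant wherever the noise can land, not just on $\cX$. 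The paper's proof needs the same implicit extension, so neither point is a defect relative to the source.
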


\begin{proof}
Firstly, notice that the problem bias (Equation \ref{eq:bias}) can be expressed as
\begin{subequations}
\begin{align}
  B^i_P(\cM, \bm{x}) &= \EE[P_i(\tilde{\bm{x}} = \bm{x} + \eta)] - P_i(\bm{x}) \\
  \label{eq:p1}
      &\approx P_i(\bm{x}) 
      + \EE\left[ \eta \nabla P_i(\bm{x})\right]
      + \EE\left[ \frac{1}{2} \eta^T \bm{H}P_i(\bm{x}) \eta\right] - P_i(\bm{x}) \\
 \label{eq:p2}
    &= \EE\left[\frac{1}{2} \eta^T \bm{H}P_i(\bm{x}) \eta\right]\\
 \label{eq:p3}
    &= \frac{1}{2} \EE\left[ 
    \sum_{j,k \in [n]} \eta_j (\bm{H}P_i)_{jk}(\bm{x}) 
    \eta_k\right]\\
 \label{eq:p4}
    &= \frac{1}{2} \EE\left[ 
    \sum_{j \in [n]} \eta_j^2 (\bm{H}P_i)_{jj}(\bm{x}) \right]\\
 \label{eq:p5}
    &= \frac{1}{2} \sum_{j \in [n]} \EE\left[ \eta_j^2 \right] 
    \sum_{j \in [n]} \EE\left[ (\bm{H}P_i)_{jj}(\bm{x}) \right]\\
 \label{eq:p6}
    &= \frac{1}{2} n \var[\eta] \, \Tr\left(\bm{H}P_i\right)(\bm{x}),
\end{align}
\end{subequations}
where the approximation (in \eqref{eq:p1}) uses a Taylor expansion of
the private allotment problem $P_i(\bm{x} + \eta)$, where $\eta =
\Lap(\nicefrac{\Delta}{\epsilon})$ and the linearity of
expectations. 
Equation \eqref{eq:p2} follows from independence of $\eta$ and $\nabla P_i(\bm{x})$ and from the assumption of unbiased noise (i.e., $\EE[\eta] = 0$) and \eqref{eq:p4} from independence of the elements of  $\eta$ and thus $\EE[n_k n_j] =0$ for $j \neq k$.
Finally, \eqref{eq:p6} follows from $\EE[\eta^2] = \var[\eta] + (\EE[\eta])^2$ and $\EE[\eta] = 0$ again, and where $\Tr$ denotes the trace of the Hessian matrix.

The bias $B^i_P$ can thus be approximated by an expression
involving the local curvature of the problem $P_i$ and the variance of the noisy input. 

Next, by definition of bounded fairness \ref{def:fairness}
\begin{subequations}
\begin{align}
  \xi_B^i(P, \cM, \bm{x}) =
  \label{eq:p1_5a}
  &\max_{j \in [n]} \left|B_P^i(\cM, \bm{x}) - B_P^j(\cM, \bm{x}) \right| \leq \alpha \\
  \Leftrightarrow
  & 
  \label{eq:p1_5b}
  n \var[\eta] \left| 
    \Tr(\bm{H}P_i)(\bm{x}) - \Tr(\bm{H}P_j)(\bm{x})  \right| \leq 
    \alpha\;\;\forall j \in [n].
\end{align}
\end{subequations}
Since, by assumption, there exists constants $c_k$ such that $\forall x \in \cX$, \(\Tr(\bm{H}P_k)(\bm{x}) =  \sum_{j,l} c^{k}_{j,l} = c_k\) for $k\in [n]$, it follows, that
\begin{align*}
 n \var[\eta] \left| c_i - c_j \right| < n \var[\eta] 
 \left( \max_{i \in [n]} c_i - \min_{i \in [n]} c_i \right)  < \infty.
\end{align*}
\end{proof}

The above shed light on the relationship between fairness and the
difference in the local curvatures of problem $P$ on any pairs of
entities. As long as this local curvature is constant across all
entities, then the difference in the bias induced by the noise onto
the decision problem of any two entities can be bounded, and so can
the (loss of) fairness. 

An important corollary of Theorem \ref{lem:fair_bound_allottments} illustrates which restrictions on the
structure of problem $P$ are needed to satisfy fairness.
\begin{corollary}
\label{cor:1}
If $P$ is a linear function, then $\cM$ is fair w.r.t.~$P$.
\end{corollary}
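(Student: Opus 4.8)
The plan is to obtain the corollary as an immediate consequence of Theorem~\ref{lem:fair_bound_allottments}, sharpened by the fact that linearity makes the underlying Taylor argument exact. First I would observe that if $P$ is linear then each component $P_i$ has the form $P_i(\bm{x}) = \bm{w}_i^\top \bm{x} + b_i$ for some coefficient vector $\bm{w}_i$ and (possibly zero) constant $b_i$, so every second partial derivative of $P_i$ vanishes and the Hessian $\bm{H}P_i$ is identically the zero matrix on all of $\cX$. In particular, the hypothesis~\eqref{eqn:thm3} of Theorem~\ref{lem:fair_bound_allottments} holds with $c^i_{j,l} = 0$ for every $i,j,l$, which already yields that $\cM$ is $\alpha$-fair w.r.t.~$P$ for some finite $\alpha$.

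Second, to upgrade from \emph{finite} $\alpha$ to genuine fairness (i.e.\ $\alpha = 0$), I would revisit the bias identity~\eqref{eq:p1}--\eqref{eq:p6}: when $P_i$ is linear the Taylor expansion terminates at the first-order term and carries no remainder, so the chain of relations becomes an exact equality and gives $B^i_P(\cM, \bm{x}) = \frac{1}{2} n \var[\eta]\,\Tr(\bm{H}P_i)(\bm{x}) = 0$ for all $i \in [n]$ and all $\bm{x} \in \cX$. Equivalently, and more directly, $\EE_{\tilde{\bm{x}}\sim\cM(\bm{x})}[P_i(\tilde{\bm{x}})] = \bm{w}_i^\top \EE[\tilde{\bm{x}}] + b_i = \bm{w}_i^\top \bm{x} + b_i = P_i(\bm{x})$, where the middle step uses that $\cM$ adds unbiased noise (so $\EE[\tilde{\bm{x}}] = \bm{x}$); hence $B^i_P(\cM,\bm{x}) = 0$.

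Since $B^i_P(\cM,\bm{x}) = B^j_P(\cM,\bm{x}) = 0$ for all $i,j \in [n]$ and all $\bm{x} \in \cX$, Definition~\ref{def:bias} immediately gives that $\cM$ is fair w.r.t.~$P$. There is essentially no obstacle to this argument; the only point worth flagging is the gap between ``finite $\alpha$'' and ``$0$-fair'': the general theorem only promises a finite bound because its proof discards the higher-order Taylor terms, whereas linearity eliminates those terms outright and pins the common bias to exactly zero, which is precisely what fairness (equivalently, $0$-fairness) demands.
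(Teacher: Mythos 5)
Your proof is correct and its first half is essentially the paper's own argument: linearity forces $\bm{H}P_i \equiv 0$, so the Hessian-trace expression for the disparity error from the proof of Theorem~\ref{lem:fair_bound_allottments} (Equation~\eqref{eq:p1_5b}) vanishes identically, giving $0$-fairness. Where you go beyond the paper is the direct computation $\EE_{\tilde{\bm{x}}\sim\cM(\bm{x})}[P_i(\tilde{\bm{x}})] = \bm{w}_i^\top \EE[\tilde{\bm{x}}] + b_i = P_i(\bm{x})$, which is worth keeping: the paper's route inherits the Taylor \emph{approximation} from Equation~\eqref{eq:p1} (the chain there is only an $\approx$, so strictly speaking Equation~\eqref{eq:p1_5b} is an approximate identity), whereas your expectation argument is exact, needs only the unbiasedness of the noise and linearity of expectation, and bypasses Theorem~\ref{lem:fair_bound_allottments} entirely. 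Your point that one must bridge the gap from ``finite $\alpha$'' to ``$\alpha=0$'' is also well taken --- the theorem's conclusion alone does not suffice, and both the paper and you correctly return to the underlying bias expression rather than citing the theorem's statement.
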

\begin{proof}
The result follows by noticing that the second derivative of linear function is $0$ for any input. Thus, for any $i \in [n]$, and $\bm{x} \in \cX, \Tr(\bm{H}P_i)(\bm{x}) = 0$.
Therefore, from \eqref{eq:p1_5b}, for every $i \in [n]$,
\begin{align*}
\xi_B^i(P, \cM, \bm{x}) &= \max_{j \in [n]} 
\left|\Tr(\bm{H}P_i)(\bm{x}) - \Tr(\bm{H}P_j)(\bm{x}) \right| = 0.
\end{align*}
\end{proof}

A more general result is the following.
\begin{corollary}
\label{cor:2}
$\cM$ is fair w.r.t.~$P$ if there exists a constant $c$ such that,
for all dataset $\bm{x}$, 
\[
\Tr(\bm{H}P_i)(\bm{x}) = c \;\; (i \in [n]).
\]
\end{corollary}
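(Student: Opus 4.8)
The plan is to reuse, almost verbatim, the computation already carried out in the proof of Theorem~\ref{lem:fair_bound_allottments}. There, the Taylor expansion of $P_i(\bm{x}+\eta)$ together with the unbiasedness and independence of the Laplace noise reduced the bias to $B^i_P(\cM,\bm{x}) \approx \tfrac{1}{2}\, n\,\var[\eta]\,\Tr(\bm{H}P_i)(\bm{x})$ (Equation~\eqref{eq:p6}), and the disparity error to $\xi_B^i(P,\cM,\bm{x}) = n\,\var[\eta]\,\max_{j\in[n]}\left|\Tr(\bm{H}P_i)(\bm{x})-\Tr(\bm{H}P_j)(\bm{x})\right|$ (Equation~\eqref{eq:p1_5b}). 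Nothing in that derivation used the full hypothesis of Theorem~\ref{lem:fair_bound_allottments} that \emph{every entry} of the Hessian is a constant function; only the behaviour of the Hessian \emph{trace} enters. So the first step is simply to recall these two identities.

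The second step is to substitute the hypothesis. If $\Tr(\bm{H}P_i)(\bm{x}) = c$ for all $i\in[n]$ and all $\bm{x}\in\cX$, then for every pair $i,j$ and every $\bm{x}$ we have $\left|\Tr(\bm{H}P_i)(\bm{x})-\Tr(\bm{H}P_j)(\bm{x})\right| = |c-c| = 0$, hence $\xi_B^i(P,\cM,\bm{x}) = 0$ for all $i\in[n]$ and all $\bm{x}$. By Definition~\ref{def:fairness} this says $\cM$ is $0$-fair w.r.t.~$P$; and $0$-fairness means exactly that $\max_{j}\left|B_P^i(\cM,\bm{x})-B_P^j(\cM,\bm{x})\right|=0$, i.e.\ $B_P^i(\cM,\bm{x}) = B_P^j(\cM,\bm{x})$ for all $i,j\in[n]$ and all $\bm{x}$, which is precisely Definition~\ref{def:bias}. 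Thus $\cM$ is fair w.r.t.~$P$.

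There is essentially no hard step here: the statement is an immediate relaxation of the hypothesis used inside the proof of Theorem~\ref{lem:fair_bound_allottments}, and it strictly subsumes Corollary~\ref{cor:1} (a linear $P$ has vanishing Hessian, hence constant trace $c=0$). The only point worth flagging is that the conclusion inherits the approximate character of the bias expansion in Equation~\eqref{eq:p1}: ``fairness'' is established for the second-order Taylor surrogate of the bias rather than for the exact bias, exactly as in Theorem~\ref{lem:fair_bound_allottments}. If one wished to make the statement exact, the obstacle would be controlling the higher-order terms of the expansion --- e.g.\ by restricting to quadratic $P_i$, so that the third and higher derivatives vanish and \eqref{eq:p1} holds with equality --- in which case the argument above goes through with no approximation at all.
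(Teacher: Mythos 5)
Your proof is correct and follows essentially the same route as the paper: the paper likewise reduces the disparity error to the difference of Hessian traces via Equation~\eqref{eq:p1_5b} and notes that a common constant $c$ makes this difference vanish, yielding $0$-fairness. Your observation that only the trace (not every Hessian entry) is needed, and your caveat that the conclusion is exact only up to the second-order Taylor surrogate of the bias, are both accurate and consistent with the paper's own treatment.
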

The proof is similar, in spirit, to proof of Corollary \ref{cor:1}, noting that, in the above, the constant $c$ is equal among all Traces of the Hessian of problems $P_i$ $(i \in [n])$.

\subsection{Fair Decision Rules: Characterization}

The next results bound the fairness violations of a class of indicator
functions, called \emph{thresholding functions}, and discusses the
loss of fairness caused by the \emph{composition of boolean
  predicates}, two recurrent features in decision rules.  The fairness
definition adopted uses the concept of absolute bias, in place of bias
in Definition \ref{def:bias}.  Indeed, the absolute bias $|B^i_{P}|$
corresponds to the classification error for (binary) decision rules of
$P_i$, i.e., $\Pr[ P_i(\tilde{\bm{x}}) \neq P_i(\bm{x})]$. The results
also assume $\cM$ to be a non-trivial mechanism, i.e., $| B^i_{P}(\cM,
\bm{x}) | < 0.5 \, \forall i \in [n]$.  Note that this is a
non-restrictive condition, since the focus of data-release mechanisms
is to preserve the quality of the original inputs, and the mechanisms
considered in this paper (and in the DP-literature, in general) all
satisfy this assumption.

\begin{theorem}
\label{thm:bias_theshold}
Consider a decision rule  $P_i(\bm{x}) \!=\! \mathbbm{1}\{ x_i \!\geq\ \ell \}$ 
for some real value $\ell$. Then, mechanism $\cM$ is $0.5$-fair 
w.r.t.~$P_i$.
\end{theorem}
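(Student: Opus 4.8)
The plan is to reduce the claim to a per-entity bound on the absolute bias and then combine two such bounds. First I would record that, for a decision rule, $|B_P^i(\cM,\bm{x})|$ equals the misclassification probability $\Pr[P_i(\tilde{\bm{x}}) \neq P_i(\bm{x})]$, as noted just before the statement, and that for $P_i(\bm{x}) = \mathbbm{1}\{x_i \geq \ell\}$ only the $i$-th coordinate of $\tilde{\bm{x}} = \bm{x} + \eta$ is relevant, so the event is governed by $\tilde{x}_i = x_i + \eta_i$ with $\eta_i \sim \Lap(\nicefrac{\Delta}{\epsilon})$.

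Second, I would bound $|B_P^i|$ by a case split on whether $x_i \geq \ell$. If $x_i < \ell$ then $P_i(\bm{x}) = 0$ and $|B_P^i| = \Pr[x_i + \eta_i \geq \ell] = \Pr[\eta_i \geq \ell - x_i]$; since $\ell - x_i > 0$ and the Laplace distribution is symmetric with median $0$, this is at most $\Pr[\eta_i \geq 0] = \tfrac12$. Symmetrically, if $x_i \geq \ell$ then $P_i(\bm{x}) = 1$ and $|B_P^i| = \Pr[x_i + \eta_i < \ell] = \Pr[\eta_i < \ell - x_i] \leq \Pr[\eta_i < 0] = \tfrac12$. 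Hence $|B_P^i(\cM, \bm{x})| \leq \tfrac12$ for every $i \in [n]$ and every $\bm{x} \in \cX$ (with strict inequality under the stated non-triviality assumption).

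Third, I would plug this into the disparity error, which for decision rules is defined via absolute biases, obtaining for every $i \in [n]$
\[
  \xi_B^i(P,\cM,\bm{x}) = \max_{j \in [n]} \Bigl| \,|B_P^i(\cM,\bm{x})| - |B_P^j(\cM,\bm{x})| \,\Bigr| \leq \max_{j \in [n]} \max\bigl(|B_P^i(\cM,\bm{x})|,\, |B_P^j(\cM,\bm{x})|\bigr) \leq \tfrac12,
\]
using that $|a - b| \leq \max(a,b)$ for nonnegative reals $a,b$. Since this holds for all $\bm{x} \in \cX$ and all $i$, Definition \ref{def:fairness} gives that $\cM$ is $0.5$-fair w.r.t.~$P_i$.

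The only real content — and the step I would be most careful about — is the per-entity bound $|B_P^i| \leq \tfrac12$, which hinges on the noise being symmetric about $0$ (equivalently, having its median at the true value $x_i$); this is exactly the ``unbiased noise'' hypothesis used elsewhere in the paper, here instantiated with the Laplace mechanism, and it is what keeps the misclassification probability of a single threshold test on one side of $\tfrac12$. Everything else is bookkeeping with absolute values.
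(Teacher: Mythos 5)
Your proof is correct and follows essentially the same route as the paper's: bound the per-entity absolute bias by $\tfrac12$ using the symmetry of the zero-mean Laplace noise about the threshold gap $\ell - x_i$, then observe that the disparity error, being a difference of nonnegative quantities each at most $\tfrac12$, is itself at most $\tfrac12$. The only cosmetic difference is that you treat both truth-value cases explicitly where the paper argues one case and invokes symmetry.
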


\begin{proof}
From Definition \ref{def:fairness} (using the absolute bias 
$|B^i_P(\cM, \bm{x})|$), and since the absolute bias is always 
non-negative, it follows that, for every $i \in [n]$:
\begin{subequations}
\label{eq:p_6}
\begin{align}
     \xi_B^i(P, \mathcal{M}, \bm{x}) &= \max_{j \in [n]}
      \left| |B_P^i(\cM, \bm{x})| - |B_P^j(\cM, \bm{x})|\right|  \\
      & \leq \max_{j \in [n]} |B_P^j(\cM, \bm{x})| - \min_{j \in [n]} |B_P^j(\cM, \bm{x})| \\
      & \leq  \max_{j \in [n]} |B_P^j(\cM, \bm{x})|.
\end{align}
\end{subequations}
Thus, by definition, mechanism $\cM$ is $\max_{j \in [n]} |B^P_j(\cM, \bm{x})|$-fair 
w.r.t.~problem $P$.
The following shows that the maximum absolute bias 
$\max_{j \in [n]} |B^P_j(\cM, \bm{x})| \leq 0.5$.
W.l.o.g.~consider an entry $i$ and the case in which $P_i(\bm{x}) = 
\textsl{True}$ (the other case is symmetric). It follows that, 
\begin{subequations}
\label{eq:p_7}
\begin{align}
\label{eq:p_7a}
        |B^i_P(\cM, \bm{x})| & = | P_i(\bm{x}) - \mathbb{E}_{\tilde{\bm{x}}_i \sim \cM(\bm{x})} [P_i(\tilde{\bm{x}})]  | \\
\label{eq:p_7b}
        & = | 1 - \Pr( \tilde{x}_i \geq \ell )| \\
\label{eq:p_7c}
        & = |1 - \Pr( \eta \geq \ell - x_i)|,
\end{align}
\end{subequations}
where $\eta \sim \mbox{Lap}(0, \nicefrac{\Delta}{\epsilon})$. 
Notice that,
\begin{equation} 
\label{eq:p_8}
\Pr( \eta  \geq \ell - x_i) \geq \Pr( \eta \geq 0) = 0.5,
\end{equation}
since $\ell-x_i \leq 0$, by case assumption (i.e., $P_i(\bm{x}) = \textsl{True}$ implies
that $x_i \geq \ell$) and by that the mechanism considered adds 
0-mean symmetric noise.  Thus, from \eqref{eq:p_7c} and \eqref{eq:p_8}, 
$|B^{i}_P(\cM, \bm{x})| \leq 0.5$, and since, the above holds for any
entity $i$, it follows that 
\begin{equation}
\label{eq:p_9}
\max_{j \in [n]} |B_P^j(\cM, \bm{x})| \leq 0.5
\end{equation}
and thus, for every $i \in [n]$, $\xi_B^i(P, \mathcal{M}, \bm{x}) \leq 0.5$,
and, therefore, from \eqref{eq:p_6} and \eqref{eq:p_9}, $\cM$ is $0.5$-fair.
\end{proof}

\noindent This is a worst-case result and the mechanism may enjoy a
better bound for specific datasets and decision rules. It is however
significant since thresholding functions are ubiquitous in decision
making over census data.

\smallskip
The next results focus on the composition of Boolean predicates
under logical operators. The results are given under the assumption
that mechanism $\cM$ adds independent noise to the inputs of the
predicates $P_1$ and $P_2$ to be composed, which is often the
case. This assumption for $P_1$ and $P_2$ is denoted by
$P^1 \independent P^2$. 

The paper first introduces the following properties and Lemmas whose 
proofs are reported in the appendix. 
\begin{property}
\label{lem:increasing_func}
The following three bivariate functions: 
$f(a,b) = ab$, 
$f(a,b) = a+ b - ab$, and 
$f(a,b) = a+b -2ab$, with support $[0, 0.5]$ and range 
$\mathcal{R}$ all are monotonically increasing on their support. 
\end{property}

\renewcommand{\Pr}[1]{\text{Pr}\left(#1\right)}
\newcommand{\False}{\textsl{False}}
\newcommand{\True}{\textsl{True}}

\begin{lemma}
  \label{lem:and_formula}
  Consider predicates $P_i^1$ and $P_i^2$ and let $P_i = P_i^1 \land P_i^2$, then,
  for any dataset $\bx \in \cX$, 
\begin{enumerate}[leftmargin=*,labelsep=2pt,itemsep=0pt,parsep=2pt,topsep=2pt,label=(\roman*)]
\item $P_i^1(\bx) =  \False \land P_i^2(\bx) =  \False \; \Rightarrow
      \Pr{ P_i(\tilde{\bx}) \neq P_i(\bx) } = |B^i_{P_i^1}| |B^i_{P_i^2}| $
\label{lemma1:c1}
\item  $P_i^1(\bx) = \False \land P_i^2(\bx) = \True \Rightarrow 
       \Pr{ P_i(\tilde{\bx}) \neq P_i(\bx)} = |B^i_{P_i^1}| (1-|B^i_{P_i^2}|) $
\label{lemma1:c2}
\item $P_i^1(\bx) = \True \land P_i^2(\bx) =  \False \Rightarrow 
       \Pr{P_i(\tilde{\bx}) \neq P_i(\bx)} = (1-|B^i_{P_i^1}|) |B^i_{P_i^2}| $
\label{lemma1:c3}
\item  $P_i^1(\bx) = \True \land P_i^2(\bx) = \True \Rightarrow
       \Pr{P_i(\tilde{\bx}) \neq P_i(\bx)} = |B^i_{P^1}| + |B^i_{P^2}| - |B^i_{P^1}| |B^i_{P^2}| $,
\label{lemma1:c4}
\end{enumerate}
where $\tilde{\bm{x}} = \cM(\bx)$ is the privacy-preserving dataset. 
\end{lemma}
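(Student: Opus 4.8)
\textbf{Proof proposal for Lemma~\ref{lem:and_formula}.}
The plan is to compute $\Pr{P_i(\tilde{\bx}) \neq P_i(\bx)}$ directly in each of the four cases by expanding the event $P_i(\tilde{\bx}) \neq P_i(\bx)$ in terms of the two component predicates, and then rewriting each elementary probability in terms of the absolute biases $|B^i_{P_i^1}|$ and $|B^i_{P_i^2}|$. The starting observation is that, since $P_i = P_i^1 \land P_i^2$ is $\{0,1\}$-valued, its absolute bias equals its misclassification probability: $|B^i_{P_i}(\cM,\bx)| = \Pr{P_i(\tilde{\bx}) \neq P_i(\bx)}$, as remarked just before Theorem~\ref{thm:bias_theshold}; the same identity holds for $P_i^1$ and $P_i^2$ individually. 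So for each component predicate, $|B^i_{P_i^k}|$ is exactly the probability that the noisy evaluation flips its truth value relative to the ground truth.

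The key step is the independence reduction. Because $\cM$ adds independent noise to the disjoint sets of counts feeding $P_i^1$ and $P_i^2$ (the assumption $P^1 \independent P^2$), the events $\{P_i^1(\tilde{\bx}) \neq P_i^1(\bx)\}$ and $\{P_i^2(\tilde{\bx}) \neq P_i^2(\bx)\}$ are independent. I would then handle the four cases separately. In case \ref{lemma1:c1}, both ground-truth values are \False, so $P_i(\bx) = \False$, and $P_i(\tilde{\bx}) = \True$ exactly when \emph{both} noisy predicates flip to \True; by independence this probability is $|B^i_{P_i^1}|\,|B^i_{P_i^2}|$. In cases \ref{lemma1:c2} and \ref{lemma1:c3}, $P_i(\bx) = \False$ again (since one component is \False), and the conjunction becomes \True precisely when the \False component flips \emph{and} the \True component does not flip; independence gives $|B^i_{P_i^1}|(1-|B^i_{P_i^2}|)$ and $(1-|B^i_{P_i^1}|)|B^i_{P_i^2}|$ respectively. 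In case \ref{lemma1:c4}, $P_i(\bx) = \True$, and $P_i(\tilde{\bx}) = \False$ whenever \emph{at least one} of the two noisy predicates flips to \False; by inclusion–exclusion this is $|B^i_{P^1}| + |B^i_{P^2}| - |B^i_{P^1}|\,|B^i_{P^2}|$.

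The one subtlety I would be careful about is that the identity ``$P_i^k$ flips iff the noisy count crosses the threshold'' uses that each $P_i^k$ is itself a (monotone) thresholding predicate on its own inputs, so that conditioning on $P_i^k(\bx)$ being \True or \False pins down precisely which direction the noise must push the count; without monotonicity one could in principle flip back. Given the thresholding form of the motivating predicates $P^M$, this holds, and I would state it as the reason the elementary events decompose cleanly. The main ``obstacle'' — really just a bookkeeping point — is keeping the roles of the two components straight across the asymmetric cases \ref{lemma1:c2} and \ref{lemma1:c3}, and being explicit that the probabilities are taken over the randomness of $\tilde{\bx} = \cM(\bx)$ with $\bx$ fixed. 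No heavy computation is needed; the whole lemma follows from the bias-equals-error identity plus independence and a one-line inclusion–exclusion in the last case.
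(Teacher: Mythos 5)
Your proof is correct and takes essentially the same route as the paper's: both proceed case-by-case, use $P^1 \independent P^2$ to factor the flip events, and convert the elementary probabilities into absolute biases, with your inclusion–exclusion in case \ref{lemma1:c4} being algebraically identical to the paper's expansion of $1-(1-|B^i_{P^1}|)(1-|B^i_{P^2}|)$. The monotonicity caveat you raise is unnecessary: the identity $|B^i_{P^k}| = \Pr{P^k_i(\tilde{\bx}) \neq P^k_i(\bx)}$ holds for any binary predicate by definition of the absolute bias, so no thresholding structure is needed beyond the independence assumption.
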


\begin{lemma}
  \label{lem:or_formula}
  Consider predicates $P_i^1$ and $P_i^2$ and let $P_i = P_i^1 \lor P_i^2$, then,
  for any dataset $\bx \in \cX$,
\begin{enumerate}[leftmargin=*,labelsep=2pt,itemsep=0pt,parsep=2pt,topsep=2pt,label=(\roman*)]
\item $P^1_i(\bx) = \False, P^2_i(\bx) = \False\; \Rightarrow 
  \Pr{P_i(\tilde{\bx}) \neq P_i(\bx)} = |B^i_{P^1}| + |B^i_{P^2}| - |B^i_{P^1}| |B^i_{P^2}| $
\item  $P^1_i(\bx) = \False, P^2_i(\bx) = \True\; \Rightarrow 
  \Pr{P_i(\tilde{\bx}) \neq P_i(\bx)} = (1-|B^i_{P^1}|) |B^i_{P^2}|  $
\item $P^1_i(\bx) = \True, P^2_i(\bx) = \False\; \Rightarrow 
  \Pr{P_i(\tilde{\bx}) \neq P_i(\bx)} =  |B^i_{P^1}| (1-|B^i_{P^2}|)$
\item  $P^1_i(\bx) = \True, P^2_i(\bx) = \True\; \Rightarrow 
  \Pr{P_i(\tilde{\bx}) \neq P_i(\bx} =  |B^i_{P^1}| |B^i_{P^2}|  $,
\end{enumerate}
where $\tilde{\bm{x}} = \cM(\bx)$ is the privacy-preserving dataset. 
\end{lemma}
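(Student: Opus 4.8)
The plan is to reduce everything to two ingredients already in hand. First, for any binary decision rule $Q$, the absolute bias equals the misclassification probability, $|B^i_Q(\cM,\bx)| = \Pr{Q_i(\tilde\bx)\neq Q_i(\bx)}$, as observed just before Theorem~\ref{thm:bias_theshold}. Second, the hypothesis $P^1\independent P^2$ says that $\cM$ perturbs the (disjoint) input coordinates feeding $P_i^1$ and $P_i^2$ with independent noise, so the two ``flip'' events $E_1 = \{P_i^1(\tilde\bx)\neq P_i^1(\bx)\}$ and $E_2=\{P_i^2(\tilde\bx)\neq P_i^2(\bx)\}$ are independent, with $\Pr{E_k}=|B^i_{P^k}|$. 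Since $P_i(\tilde\bx)=P_i^1(\tilde\bx)\lor P_i^2(\tilde\bx)$ depends on the noisy data only through $P_i^1(\tilde\bx)$ and $P_i^2(\tilde\bx)$, the error event $\{P_i(\tilde\bx)\neq P_i(\bx)\}$ is, in each of the four cases, a fixed Boolean combination of $E_1, E_2$ and their complements, and I would identify it case by case.

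Then the case analysis: (i) if $P_i^1(\bx)=P_i^2(\bx)=\False$ then $P_i(\bx)=\False$, and a disagreement occurs exactly when at least one predicate flips to $\True$, i.e.\ on $E_1\cup E_2$; inclusion--exclusion plus independence gives $|B^i_{P^1}|+|B^i_{P^2}|-|B^i_{P^1}||B^i_{P^2}|$. (ii) if $P_i^1(\bx)=\False$ and $P_i^2(\bx)=\True$ then $P_i(\bx)=\True$, and a disagreement needs $P_i(\tilde\bx)=\False$, so $P_i^1$ must stay $\False$ (the event $\overline{E_1}$) and $P_i^2$ must flip to $\False$ (the event $E_2$); independence gives $(1-|B^i_{P^1}|)\,|B^i_{P^2}|$. (iii) is the mirror image of (ii) with the roles of $P_i^1$ and $P_i^2$ exchanged, yielding $|B^i_{P^1}|\,(1-|B^i_{P^2}|)$. (iv) if $P_i^1(\bx)=P_i^2(\bx)=\True$ then $P_i(\bx)=\True$, and a disagreement requires both predicates to flip to $\False$, i.e.\ $E_1\cap E_2$, giving $|B^i_{P^1}||B^i_{P^2}|$.

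As a cross-check, I would point out that Lemma~\ref{lem:or_formula} is equivalent to Lemma~\ref{lem:and_formula} through De~Morgan: $P_i^1\lor P_i^2 = \neg(\neg P_i^1\land\neg P_i^2)$, and replacing a predicate by its negation changes neither its absolute bias nor whether it is misclassified. Hence the probability of error for $P_i^1\lor P_i^2$ at a truth-assignment $(s_1,s_2)$ coincides with that for $\neg P_i^1\land\neg P_i^2$ at $(\neg s_1,\neg s_2)$, which is precisely why the four right-hand sides here are those of Lemma~\ref{lem:and_formula} read in the opposite order. Establishing Lemma~\ref{lem:and_formula} first and then quoting this reduction is an equally valid route.

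The only point requiring care --- and the sole ``obstacle'', such as it is --- is directionality: a predicate that is currently $\False$ can only err by becoming $\True$, and one that is $\True$ only by becoming $\False$, so each occurrence of $E_k$ or $\overline{E_k}$ must be matched to the correct event before taking products; and the independence of $E_1$ and $E_2$, which I would justify once at the outset from $P^1\independent P^2$ and the disjointness of the two predicates' input coordinates. After those two remarks the computation is mechanical.
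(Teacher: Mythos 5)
Your proof is correct and takes essentially the same route as the paper: the paper proves Lemma~\ref{lem:or_formula} by noting it is ``similar to'' Lemma~\ref{lem:and_formula}, whose proof is exactly your case-by-case reduction of the error event to a Boolean combination of the independent flip events, followed by inclusion--exclusion or products. Your De~Morgan cross-check is a pleasant extra that makes the ``similar to'' reduction precise, but it does not change the substance of the argument.
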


\begin{lemma}
\label{lem:xor_formula}
Given $P(\bx) = P^1(\bx) \oplus P^2(\bx)$, then 
for any value of  $P^1_i(\bx), P^2_i(\bx) \in \{\False, \True\}$:
$$ \Pr{P_i(\tilde{\bx}) \neq P_i(\bx)}  
= |B^i_{P^1}| + |B^i_{P^2}| - 2|B^i_{P^1}| |B^i_{P^2}|. $$
\end{lemma}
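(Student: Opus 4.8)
The plan is to reduce the statement to a two-input case analysis on whether each predicate's value is preserved by the mechanism, and then invoke independence — exactly as in the proofs of Lemmas~\ref{lem:and_formula} and~\ref{lem:or_formula}.

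First I would recall that, for a binary-valued predicate $Q$, the absolute bias satisfies $|B^i_Q(\cM,\bx)| = \Pr{Q_i(\tilde{\bx}) \neq Q_i(\bx)}$ (since $Q_i$ takes values in $\{0,1\}$, $|\EE[Q_i(\tilde{\bx})] - Q_i(\bx)|$ is precisely the probability of a flip), as already noted in the text. Introduce the shorthand $p_1 \defeq |B^i_{P^1}|$ and $p_2 \defeq |B^i_{P^2}|$, and let $F_1$ (resp.\ $F_2$) denote the event $\{P^1_i(\tilde{\bx}) \neq P^1_i(\bx)\}$ (resp.\ $\{P^2_i(\tilde{\bx}) \neq P^2_i(\bx)\}$), so that $\Pr{F_1} = p_1$ and $\Pr{F_2} = p_2$. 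By the independence assumption $P^1 \independent P^2$, the events $F_1$ and $F_2$ are independent.

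The key observation is that $\oplus$ toggles exactly when an odd number of its arguments toggle, which for two arguments means exactly one. Concretely, for the fixed base values $a = P^1_i(\bx)$, $b = P^2_i(\bx)$ and any realized values $\tilde a = P^1_i(\tilde{\bx})$, $\tilde b = P^2_i(\tilde{\bx})$, one has $\tilde a \oplus \tilde b \neq a \oplus b$ if and only if exactly one of $\tilde a \neq a$, $\tilde b \neq b$ holds; this is a one-line check on the $2\times 2$ truth table of $\oplus$, and crucially it does \emph{not} depend on the particular values of $a$ and $b$. Consequently
\[
\{P_i(\tilde{\bx}) \neq P_i(\bx)\} \;=\; (F_1 \cap F_2^c) \;\cup\; (F_1^c \cap F_2),
\]
a disjoint union. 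Taking probabilities and factoring via independence gives $\Pr{P_i(\tilde{\bx}) \neq P_i(\bx)} = p_1(1-p_2) + (1-p_1)p_2 = p_1 + p_2 - 2 p_1 p_2$, which is the claimed formula; the fact that this value is identical for all four combinations of $(P^1_i(\bx), P^2_i(\bx))$ is exactly the content of the lemma and falls out automatically from the truth-table observation above.

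The only delicate point is stating the event identity cleanly — i.e.\ that ``$\oplus$ changes $\Leftrightarrow$ exactly one argument changes'' holds uniformly in the base truth values — but this is routine; everything else is the same independence-factorization argument already employed for the $\land$ and $\lor$ cases.
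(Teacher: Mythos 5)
Your proof is correct and follows essentially the same route as the paper's: both rest on the observation that the XOR output flips exactly when an odd number (here, exactly one) of the inputs flips, independent of the base truth values, and then factor via the independence $P^1 \independent P^2$. The only cosmetic difference is that you compute the probability of the "exactly one flip" event directly as $p_1(1-p_2)+(1-p_1)p_2$, whereas the paper computes it as one minus the probabilities of "both preserved" and "both flipped"; the two calculations are trivially equivalent.
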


\begin{theorem}
\label{thm:and_unfairness}
Consider predicates $P^1$ and $P^2$ such that $P^1 \!\independent\! P^2$
and assume that mechanism $\cM$ is $\alpha_k$-fair for
predicate $P^k$ $(k \in \{1,2\}$). Then $\cM$ is $\alpha$-fair for 
predicates $P^1 \lor P^2$ and
$P^1 \land P^2$ with 
\begin{equation}
\label{thm5:eq}
\alpha \!=\! \left(
  \alpha_1 \!+\! \underline{B}^1 \!+\! \alpha_2 \!+\! \underline{B}^2 
\!-\! (\alpha_1 \!+\! \underline{B}^1) (\alpha_2 \!+\! \underline{B}^2) 
\!-\! \underline{B}^1 \underline{B}^2
\right),
\end{equation}
where $\overline{B}^k$ and $\underline{B}^k$ are the maximum 
and minimum absolute biases for $\cM$ w.r.t.~$P^k$ (for $k=\{1,2\}$).
\end{theorem}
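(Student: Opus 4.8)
The plan is to reduce the claim to Property~\ref{lem:increasing_func} together with Lemmas~\ref{lem:and_formula} and~\ref{lem:or_formula}. First I would fix an entity $i$ and an operator (say $P_i = P_i^1 \land P_i^2$; the $\lor$ case is handled symmetrically by reading Lemma~\ref{lem:or_formula} in place of Lemma~\ref{lem:and_formula}). Recall that, because $P$ is a decision rule, the absolute bias $|B^i_P(\cM,\bx)|$ equals the misclassification probability $\Pr{P_i(\tilde\bx)\neq P_i(\bx)}$, so the four cases of Lemma~\ref{lem:and_formula} give closed-form expressions for $|B^i_P|$ as one of the three bivariate functions $g_1(a,b)=ab$, $g_2(a,b)=a+b-ab$ (up to the substitutions $a\mapsto 1-a$ and/or $b\mapsto 1-b$), evaluated at $a=|B^i_{P^1}|$, $b=|B^i_{P^2}|$. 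Since $\cM$ is non-trivial, every $|B^i_{P^k}|\in[0,0.5]$, so these arguments always lie in the support of Property~\ref{lem:increasing_func}.

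The key step is to bound $|B^i_P|$ from above and below uniformly over $i$. Using $\alpha_k$-fairness of $\cM$ for $P^k$, for every $i$ we have $|B^i_{P^k}|\in[\underline B^k,\overline B^k]$ with $\overline B^k\le \underline B^k+\alpha_k$. Among the four Lemma~\ref{lem:and_formula} cases, the largest misclassification probability is attained in case~\ref{lemma1:c4} ($P_i^1=P_i^2=\True$), where $|B^i_P| = g_2(|B^i_{P^1}|,|B^i_{P^2}|)$; by monotonicity (Property~\ref{lem:increasing_func}) this is maximized by taking $|B^i_{P^k}| = \overline B^k \le \underline B^k+\alpha_k$, giving
\[
\max_i |B^i_P| \;\le\; g_2(\underline B^1+\alpha_1,\ \underline B^2+\alpha_2)
\;=\; (\alpha_1+\underline B^1)+(\alpha_2+\underline B^2)-(\alpha_1+\underline B^1)(\alpha_2+\underline B^2).
\]
For the lower bound, the smallest misclassification probability over the four cases is $g_1(|B^i_{P^1}|,|B^i_{P^2}|)=|B^i_{P^1}||B^i_{P^2}|$ (case~\ref{lemma1:c1}), which, again by monotonicity, is minimized at $|B^i_{P^k}|=\underline B^k$, so $\min_i|B^i_P|\ge \underline B^1\underline B^2$. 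Then, exactly as in the proof of Theorem~\ref{thm:bias_theshold}, $\xi^i_B(P,\cM,\bx)=\max_j\big||B^i_P|-|B^j_P|\big|\le \max_j|B^j_P|-\min_j|B^j_P|$, and substituting the two bounds yields precisely the claimed $\alpha$ in~\eqref{thm5:eq}. The $\lor$ case is identical with the roles of cases permuted, since Lemma~\ref{lem:or_formula} produces the same set of functions $\{g_1,g_2\}$, so the same extremal analysis applies and the same $\alpha$ results.

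The main obstacle I anticipate is the bookkeeping over which of the four ground-truth configurations $(P_i^1(\bx),P_i^2(\bx))$ actually occurs for each entity: the bound must hold no matter how the entities are distributed across the four cases, so I must argue that $g_1$ is the pointwise minimum and $g_2$ the pointwise maximum of the four case-expressions over $[0,0.5]^2$ (using $1-a\ge a$ for $a\le 0.5$ to compare, e.g., $ab$ against $a(1-b)$ against $(1-a)(1-b)$), and only then invoke monotonicity to push the arguments to their extremes $\underline B^k$ and $\overline B^k$. A secondary subtlety is that the stated $\alpha$ is written with the $-\underline B^1\underline B^2$ correction term, which is exactly $\big[g_2(\underline B^1+\alpha_1,\underline B^2+\alpha_2)\big] - \underline B^1\underline B^2 = \max_i|B^i_P| - \min_i|B^i_P|$ in the worst case; I would make sure the algebra of this subtraction is displayed cleanly so the final expression matches~\eqref{thm5:eq} verbatim.
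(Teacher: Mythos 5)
Your proposal is correct and follows essentially the same route as the paper's proof: sandwich each entity's misclassification probability between $|B^i_{P^1}||B^i_{P^2}|$ and $|B^i_{P^1}|+|B^i_{P^2}|-|B^i_{P^1}||B^i_{P^2}|$ via the case analysis of Lemma~\ref{lem:and_formula}, push the arguments to their extremes $\underline B^k$ and $\overline B^k$ using the monotonicity in Property~\ref{lem:increasing_func}, and substitute $\overline B^k=\underline B^k+\alpha_k$. Your explicit attention to the fact that the bound must hold regardless of which truth-value configuration each entity falls into is exactly the point the paper's sandwich inequality is handling, just stated more carefully.
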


\begin{proof}
The proof focuses on the case $P^1 \land P^2$ 
while the proof for the disjunction is similar.

First notice that by Lemma \ref{lem:and_formula} and assumption 
of $\cM$ being non-trivial, it follows that
\begin{align}
  |B^i_{P^1}||B^i_{P^2}| &< |B^i_{P^1}|(1-|B^i_{P^2}|), \\ 
  |B^i_{P^2}|(1-|B^i_{P^1}|) &< 
  |B^i_{P^1}| + |B^i_{P^2}| - |B^i_{P^1}| 
  |B^i_{P^2}|.
\end{align}
due to that $0 \leq |B^i_{P^1}| \leq 0.5$
and $0 \leq |B^i_{P^2}| \leq 0.5$, and thus:
\begin{subequations}
\begin{align}
    |B^i_{P^1}| |B^i_{P^2}| &\leq \Pr{P_i(\tilde{\bx})\neq  P_i(\bx)} \\
                            &\leq |B^i_{P^1}| +|B^i_{P^2}| - |B^i_{P^1}| |B^i_{P^2}|,
\end{align}
\end{subequations}
From the above, the maximum absolute bias $\overline{B}_P$ 
can be upper bounded as:
\begin{subequations}
\begin{align}
    \overline{B}_P 
    & = \max_{i} \Pr{P_i(\tilde{x}) \neq P_i(x)} \\
    & \leq \max_i |B^i_{P^1}| +|B^i_{P^2}| - |B^i_{P^1}| |B^i_{P^2}| \\
    & = \overline{B^1} + \overline{B^2} -\overline{B^1}\overline{B^2},
\end{align}
\end{subequations}
where the first inequality follows by Lemma 
\ref{lem:and_formula} and the last equality 
follows by Property \ref{lem:increasing_func}.

Similarly, the minimum absolute bias of $\underline{B}_P$
can be lower bounded as:
\begin{subequations}
\begin{align}
    \underline{B}_P 
     & =  \min_{i} \Pr{P_i(\tilde{x}) \neq P_i(x)} \\
     & \geq \min_{i} |B^i_{P^1}| |B^i_{P^2}|   = \underline{B^1} \underline{B^2},
\end{align}
\end{subequations}

where the first inequality is due to Lemma \ref{lem:and_formula}, and the last equality is due to Property \ref{lem:increasing_func}. Hence, the level of unfairness $\alpha$ of problem $P$ can be determined by:
\begin{align}
\begin{split}
\label{eq:alpha_bound}
    \alpha =\overline{B}_P - \underline{B}_P \leq \overline{B^1} + \overline{B^2} -\overline{B^1}\overline{B^2} - \underline{B^1} \underline{B^2}.
    \end{split}
\end{align}

Substituting 
$\overline{B^1} = (\alpha_1 + \underline{B^1})$ 
and $\overline{B^2} = (\alpha_2 + \underline{B^2})$ 
into Equation \eqref{eq:alpha_bound}  gives the sought fairness bound.
\end{proof}
\noindent
The result above bounds the fairness violation derived by the 
composition of Boolean predicates under logical operators.

\begin{theorem}
\label{thm:xor_unfairness}
Consider predicates $P^1$ and $P^2$ such that $P^1 \!\independent\! P^2$
and assume that mechanism $\cM$ that is $\alpha_k$-fair for
predicate $P^k$ $(k \!\in\! \{1,2\}$). Then $\cM$ is $\alpha$-fair for $P^1 \oplus P^2$ 
with 
\begin{equation}
\label{thm6:eq}
\alpha \!=\! \left( 
     \alpha_1 (1-2\underline{B}^2) + \alpha_2 (1- 2\underline{B}^1) 
  - 2\alpha_1 \alpha_2 
  \right),
\end{equation}
where $\underline{B}^k$ is the minimum absolute bias for $\cM$
w.r.t.~$P^k$ ($k\!=\!\{1,2\}$).
\end{theorem}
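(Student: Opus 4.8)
The plan is to mirror the structure of the proof of Theorem~\ref{thm:and_unfairness}, with the simplification that the \textsc{xor} case requires no case split on the ground-truth truth values. First I would invoke Lemma~\ref{lem:xor_formula}: for every entity $i$ and \emph{every} configuration of $P^1_i(\bx), P^2_i(\bx) \in \{\False,\True\}$, the classification error of $P_i = P^1_i \oplus P^2_i$ equals $g\!\left(|B^i_{P^1}|, |B^i_{P^2}|\right)$ with $g(a,b) = a + b - 2ab$. Since, for a decision rule, the absolute bias $|B^i_P(\cM,\bx)|$ is exactly this classification error $\Pr{P_i(\tilde{\bx}) \neq P_i(\bx)}$ (as noted before Theorem~\ref{thm:bias_theshold}), we obtain the clean identity $|B^i_P(\cM,\bx)| = g\!\left(|B^i_{P^1}|, |B^i_{P^2}|\right)$ for all $i \in [n]$.

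Next, exactly as in the proof of Theorem~\ref{thm:bias_theshold}, the disparity error of entity $i$ obeys $\xi^i_B(P,\cM,\bx) = \max_{j \in [n]} \big||B^i_P| - |B^j_P|\big| \le \overline{B}_P - \underline{B}_P$, where $\overline{B}_P = \max_i |B^i_P|$ and $\underline{B}_P = \min_i |B^i_P|$; hence it suffices to bound $\overline{B}_P - \underline{B}_P$. Because $\cM$ is non-trivial, $|B^i_{P^k}| \in [0,0.5]$ for $k \in \{1,2\}$, so the arguments of $g$ always lie in the support $[0,0.5]^2$, and Property~\ref{lem:increasing_func} tells us $g$ is monotonically increasing there (componentwise in each argument). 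Applying this monotonicity gives $\overline{B}_P \le g(\overline{B^1}, \overline{B^2})$ and $\underline{B}_P \ge g(\underline{B^1}, \underline{B^2})$, and therefore $\alpha = \overline{B}_P - \underline{B}_P \le g(\overline{B^1},\overline{B^2}) - g(\underline{B^1},\underline{B^2})$.

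The remaining step is routine algebra. Expanding the difference, $g(\overline{B^1},\overline{B^2}) - g(\underline{B^1},\underline{B^2}) = (\overline{B^1}-\underline{B^1}) + (\overline{B^2}-\underline{B^2}) - 2(\overline{B^1}\overline{B^2} - \underline{B^1}\underline{B^2})$. Substituting $\overline{B^k} = \alpha_k + \underline{B^k}$ (valid since $\cM$ is $\alpha_k$-fair w.r.t.~$P^k$) makes the first two terms $\alpha_1 + \alpha_2$, while the cross term simplifies via $(\alpha_1+\underline{B^1})(\alpha_2+\underline{B^2}) - \underline{B^1}\underline{B^2} = \alpha_1\alpha_2 + \alpha_1\underline{B^2} + \alpha_2\underline{B^1}$. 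Collecting terms yields $\alpha_1(1 - 2\underline{B^2}) + \alpha_2(1 - 2\underline{B^1}) - 2\alpha_1\alpha_2$, which is precisely Equation~\eqref{thm6:eq}.

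I do not expect a genuine obstacle here: the \textsc{xor} composition is in fact cleaner than the \textsc{and}/\textsc{or} cases, because Lemma~\ref{lem:xor_formula} collapses all four truth-value configurations into the single function $g$. The one point to state carefully is that the monotonicity supplied by Property~\ref{lem:increasing_func} must be used separately in each coordinate over the whole box $[0,0.5]^2$, which is what justifies bounding $\overline{B}_P$ by $g$ evaluated at the coordinatewise maxima and $\underline{B}_P$ by $g$ evaluated at the coordinatewise minima.
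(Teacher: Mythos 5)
Your proposal is correct and follows essentially the same route as the paper's proof: apply Lemma~\ref{lem:xor_formula} to reduce each entity's absolute bias to $g(a,b)=a+b-2ab$, use the monotonicity from Property~\ref{lem:increasing_func} to bound the maximum and minimum biases by $g(\overline{B}^1,\overline{B}^2)$ and $g(\underline{B}^1,\underline{B}^2)$, and substitute $\overline{B}^k=\underline{B}^k+\alpha_k$. If anything, your version is slightly more careful than the paper's, since you correctly state these as inequalities (the componentwise maxima of $|B^i_{P^1}|$ and $|B^i_{P^2}|$ need not be attained at the same entity $i$), which still suffices for the $\alpha$-fairness claim.
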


\begin{proof}
First, notice that the maximum absolute bias for $\cM$ w.r.t.~$P = P^1 \oplus P^2$ 
can be expressed as: 
\begin{subequations}
\begin{align}
      \max_{i} \Pr{P_i(\tilde{\bx}) \neq P_i(\bx)} 
    &=\max_{|B^i_{P^1}|, |B^i_{P^2}|} |B^i_{P^1}| + |B^i_{P^2}|
     - 2|B^i_{P^1}| |B^i_{P^2}| \\
    &= \overline{B}^1 + \overline{B}^2 - 2\overline{B}^1\overline{B}^2,
\end{align}
\end{subequations}
\noindent
where the first equality is due to Lemma 
\ref{lem:xor_formula}, and the second due to Property 
\ref{lem:increasing_func}. 

Similarly, the minimum absolute bias for $\cM$ w.r.t.~ $P = P^1 \oplus P^2$ 
can be expressed as:
\begin{subequations}
\begin{align}
  \min_{i} \Pr{P_i(\tilde{x})\neq  P_i(x)}
  &= \min_{|B^i_{P^1}|, |B^i_{P^2}|} |B^i_{P^1}| + |B^i_{P^2}| 
     - 2|B^i_{P^1}| |B^i_{P^2}| \\
  &= \underline{B}^1 + \underline{B}^2 - 2\underline{B}^1\underline{B}^2.
\end{align}
\end{subequations}
Since the fairness bound $\alpha$ is defined as the difference 
between the maximum and the minimum absolute biases, it follows:
\begin{subequations}
\begin{align}
  \alpha &= \max_{i} \Pr{P_i(\tilde{x}) \neq  P_i(x)} - \min_{i} \Pr{P_i(\tilde{x})  \neq  P_i(x)}  \noindent\\
         &= \overline{B}^1_i + \overline{B}^2 - 2\overline{B}^1_i \overline{B}^2 
          - \underline{B}^1_i + \underline{B}^2 - 2\underline{B}^1_i \underline{B}^2,
\end{align}
\end{subequations}
Replacing 
$\overline{B}^1_i = \underline{B}^1_i +\alpha_1$ and  
$\overline{B}^2 = \underline{B}^2 +\alpha_2$, 
gives the sought fairness bound.
\end{proof}

\noindent
The following is a direct consequence of Theorem~\ref{thm:xor_unfairness}.

\begin{corollary}
Assume that mechanism $\cM$ is fair w.r.t.~problems $P^1$ 
and $P^2$. Then $\cM$ is also fair w.r.t.~$P^1\oplus P^2$.
\end{corollary}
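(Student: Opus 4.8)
The plan is to invoke Theorem~\ref{thm:xor_unfairness} directly, since a \emph{fair} mechanism is exactly a $0$-fair mechanism by the remark following Definition~\ref{def:fairness}. First I would recall that ``$\cM$ is fair w.r.t.~$P^k$'' means, in the notation of Theorem~\ref{thm:xor_unfairness}, that $\cM$ is $\alpha_k$-fair with $\alpha_k = 0$ for $k \in \{1,2\}$. I would also note that the standing assumption used throughout this subsection — namely that $\cM$ injects independent noise into the inputs of the two predicates, written $P^1 \independent P^2$ — is precisely the hypothesis required to apply Theorem~\ref{thm:xor_unfairness}, so that theorem is available.

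Then I would simply substitute $\alpha_1 = \alpha_2 = 0$ into the fairness bound \eqref{thm6:eq}:
\[
\alpha = \alpha_1(1 - 2\underline{B}^2) + \alpha_2(1 - 2\underline{B}^1) - 2\alpha_1 \alpha_2 = 0,
\]
irrespective of the values of the minimum absolute biases $\underline{B}^1, \underline{B}^2$ (these are well-defined non-negative reals bounded by $0.5$ under the non-triviality assumption, so no degenerate case arises). Hence $\cM$ is $0$-fair w.r.t.~$P^1 \oplus P^2$, and a $0$-fair mechanism is fair by Definition~\ref{def:bias}, which completes the argument.

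There is essentially no obstacle here; the only care needed is the bookkeeping identification of ``fair'' with ``$0$-fair'' and the observation that the independence hypothesis of Theorem~\ref{thm:xor_unfairness} is inherited. If a self-contained argument were preferred, one could instead start from Lemma~\ref{lem:xor_formula}, which gives $\Pr{P_i(\tilde{\bx}) \neq P_i(\bx)} = |B^i_{P^1}| + |B^i_{P^2}| - 2|B^i_{P^1}||B^i_{P^2}|$ uniformly over the truth values of $P^1_i(\bx), P^2_i(\bx)$; fairness of $\cM$ w.r.t.~$P^1$ and $P^2$ forces $|B^i_{P^1}|$ and $|B^i_{P^2}|$ to be independent of $i$, hence so is the displayed expression, so $\xi_B^i(P^1 \oplus P^2, \cM, \bx) = 0$ for all $i$ and all $\bx$. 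I would present the one-line substitution into \eqref{thm6:eq} as the main proof and mention the Lemma~\ref{lem:xor_formula} route only as a remark if space permits.
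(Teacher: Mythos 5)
Your proof is correct and follows exactly the route the paper takes: the paper's own justification is the one-line observation that the corollary is Theorem~\ref{thm:xor_unfairness} with $\alpha_1 = \alpha_2 = 0$, which is precisely your substitution into \eqref{thm6:eq}. The additional remark via Lemma~\ref{lem:xor_formula} is a sound, slightly more self-contained variant, but the main argument is the same.
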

The above is a direct consequence of Theorem \ref{thm:xor_unfairness}
for $\alpha_1 =0, \alpha_2 =0$. 
While the XOR operator $\oplus$ is not adopted in the case studies considered
in this paper, it captures a surprising, positive compositional fairness 
result.

\section{The Nature of Bias}

The previous section characterized conditions bounding fairness
violations.  In contrast, this section analyzes the reasons for
disparity errors arising in the motivating problems.

\subsection{The Problem Structure}
The first result is an important corollary of Theorem \ref{lem:fair_bound_allottments}. 
It studies which restrictions on the structure of problem $P$ are
needed to satisfy fairness. Once again, $P$ is assumed to be at least
twice differentiable.

\begin{corollary}
\label{cor:4}
Consider an allocation problem $P$. Mechanism $\cM$ is not fair
w.r.t.~$P$ if there exist two entries $i, j \in [n]$ such that
$\Tr(\bm{H}P_i)(\bm{x}) \neq \Tr(\bm{H}P_j)(\bm{x})$ for some dataset
$\bm{x}$.
\end{corollary}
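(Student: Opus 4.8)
The plan is to derive Corollary~\ref{cor:4} as a straightforward contrapositive consequence of the bias expression established inside the proof of Theorem~\ref{lem:fair_bound_allottments}. Recall that equations~\eqref{eq:p1}--\eqref{eq:p6} give the (Taylor-order) identity
\[
  B^i_P(\cM, \bm{x}) \;=\; \tfrac{1}{2}\, n\, \var[\eta]\; \Tr(\bm{H}P_i)(\bm{x}),
\]
and that the disparity error is, by equation~\eqref{eq:p1_5b},
\[
  \xi_B^i(P, \cM, \bm{x}) \;=\; n\, \var[\eta]\, \max_{j\in[n]}\left|\Tr(\bm{H}P_i)(\bm{x}) - \Tr(\bm{H}P_j)(\bm{x})\right|.
\]
So the plan is: assume there exist entries $i,j\in[n]$ and a dataset $\bm{x}\in\cX$ with $\Tr(\bm{H}P_i)(\bm{x}) \neq \Tr(\bm{H}P_j)(\bm{x})$, and show $\cM$ cannot be fair w.r.t.~$P$ in the sense of Definition~\ref{def:bias}.

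First I would note that $n\,\var[\eta] > 0$: the number of entities $n$ is positive, and the Laplace mechanism with finite sensitivity $\Delta>0$ and privacy loss $\epsilon>0$ adds noise with strictly positive variance $\var[\eta] = 2(\Delta/\epsilon)^2$. Hence the product $n\,\var[\eta]\,\bigl|\Tr(\bm{H}P_i)(\bm{x}) - \Tr(\bm{H}P_j)(\bm{x})\bigr|$ is strictly positive whenever the two Hessian traces differ. Second, from the bias identity above, $B^i_P(\cM,\bm{x}) - B^j_P(\cM,\bm{x}) = \tfrac{1}{2} n\,\var[\eta]\,\bigl(\Tr(\bm{H}P_i)(\bm{x}) - \Tr(\bm{H}P_j)(\bm{x})\bigr) \neq 0$, so $B^i_P(\cM,\bm{x}) \neq B^j_P(\cM,\bm{x})$ for this particular $\bm{x}$. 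By Definition~\ref{def:bias}, fairness requires equality of biases for \emph{all} datasets and \emph{all} pairs of entities, so exhibiting one violating dataset and pair suffices to conclude $\cM$ is not fair w.r.t.~$P$. Equivalently, one can argue through $\xi_B^i$: since $\xi_B^i(P,\cM,\bm{x}) > 0$ for this $\bm{x}$, no finite $\alpha$-fairness with $\alpha = 0$ holds, which is exactly the negation of fairness (a fair mechanism being $0$-fair).

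The only genuine subtlety — and the step I would be most careful about — is that the bias identity is derived via a second-order Taylor expansion (the ``$\approx$'' in \eqref{eq:p1}), so it is exact only when $P_i$ is itself quadratic, and otherwise carries higher-order remainder terms. For a fully rigorous statement one should either restrict to the regime where the Taylor approximation is the operative model (as the rest of the paper implicitly does), or note that the higher-order terms are $o(\var[\eta])$ and so cannot cancel the strictly-positive leading term for sufficiently small $\epsilon^{-1}$... I would flag this and then simply appeal to the same approximation convention already adopted in the proof of Theorem~\ref{lem:fair_bound_allottments}, keeping the argument at the level of the leading-order bias expression. With that convention in place, the corollary is immediate: differing Hessian traces force differing biases, which contradicts Definition~\ref{def:bias}, completing the proof.
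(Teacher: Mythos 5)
Your argument is correct and follows exactly the route the paper intends: the paper dismisses this corollary as ``a direct consequence of Theorem~\ref{lem:fair_bound_allottments},'' and you have simply made that explicit by reading off the bias identity $B^i_P(\cM,\bm{x}) = \tfrac{1}{2} n\,\var[\eta]\,\Tr(\bm{H}P_i)(\bm{x})$ from its proof and observing that differing traces force differing biases, violating Definition~\ref{def:bias}. Your flag about the second-order Taylor approximation is a fair caveat that applies equally to the paper's own derivation, and your resolution (working at the same leading-order convention) is the right call.
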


\noindent
The corollary is a direct consequence of Theorem \ref{lem:fair_bound_allottments}. 
It implies that fairness cannot be achieved if $P$ is \emph{a
  non-convex function}, as is the case for \emph{all} the allocation
problems considered in this paper. {\em A fundamental consequence of
  this result is the recognition that adding Laplacian noise to the
  inputs of the motivating example will necessarily introduce fairness
  issues.} 

\begin{example}
For instance, consider $\tfa$ and notice that the trace of
its Hessian
\[
\Tr(\bm{H}\tfa_i) = 2a_i \left[
\frac{x_i \sum_{j\in[n]} a_j^2 - a_i \left(\sum_{j\in[n]} x_ja_j\right)}
     {\left(\sum_{j\in[n]}x_ja_j\right)^3}
\right],
\]
is not constant with respect to its inputs. Thus, any two entries $i,
j$ whose $x_i \neq x_j$ imply $\Tr(\bm{H}\tfa_i) \neq
\Tr(\bm{H}\tfa_j)$.  As illustrated in Figure \ref{fig:p1motivation},
Problem $\tfa$ can introduce significant disparity errors.  For $\epsilon = 0.001,
0.01$, and $0.1$ the estimated fairness bounds are $0.003$, $3\times
10^{-5}$, and $1.2\times 10^{-6}$ respectively, which amount to an
average misallocation of \$43,281, \$4,328, and \$865.6 respectively.
The estimated fairness bounds were obtained by performing a linear
search over all $n$ school districts and selecting the maximal
$\Tr(\bm{H}\tfa_i)$.
\end{example}

\begin{figure*}[!t]
\centering
\includegraphics[height=70pt]{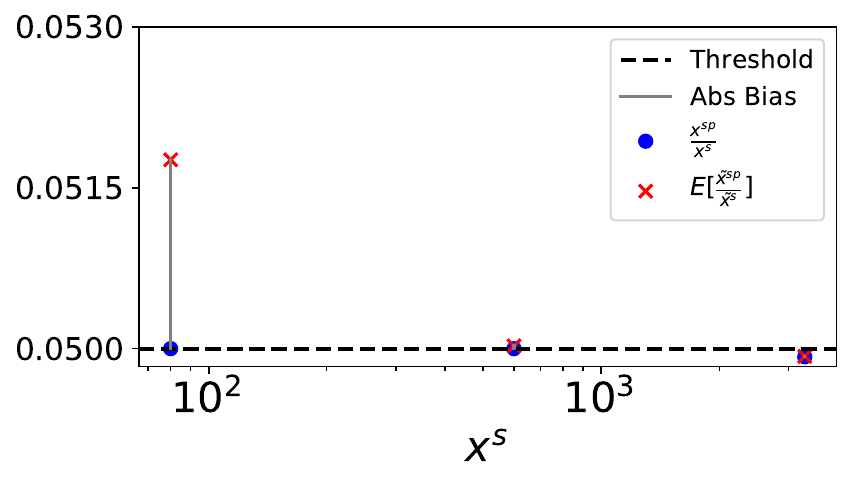}
\includegraphics[height=70pt]{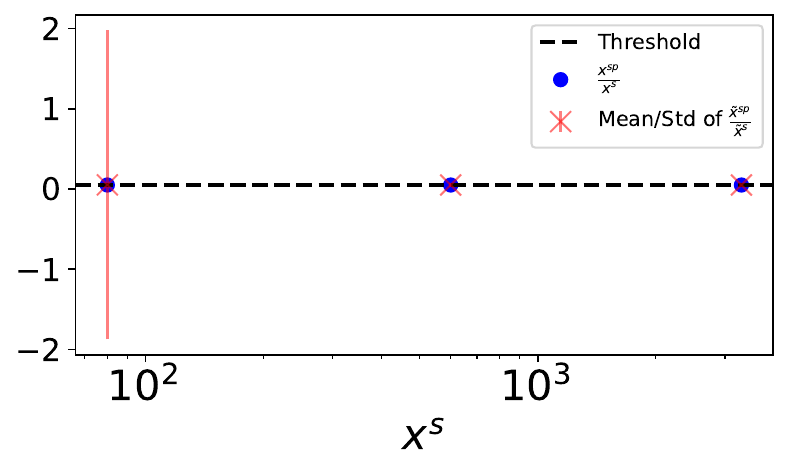}
\includegraphics[height=70pt]{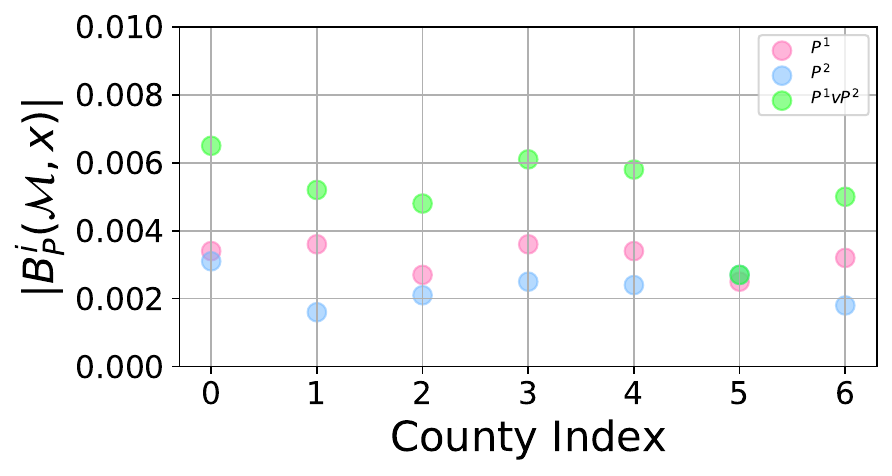}
\caption{Unfairness effect in \emph{ratios} (left), \emph{thresholding} (middle)
and predicates disjunction (right)}
\label{fig:impact_shape}
\end{figure*}

\paragraph{Ratio Functions}
The next result considers \emph{ratio functions} of the form
$P_i(\langle x,y \rangle) \!=\! \nicefrac{x}{y}$ with $x,y \!\in\! \mathbb{R}$
and $x \!\leq\! y$, which occur in the Minority language voting right
benefits problem $P_i^M$. In the following $\cM$ is the Laplace mechanism. 
\begin{corollary}
Mechanism $\cM$ is not fair w.r.t.~$P_i(\langle x,y \rangle) = \nicefrac{x}{y}$ and inputs $x, y$.
\end{corollary}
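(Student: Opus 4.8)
The plan is to reduce the claim immediately to Corollary~\ref{cor:4}: since the ratio map is twice differentiable on the admissible domain, it suffices to produce a single dataset on which the Hessian traces of two entities' ratio functions disagree. First I would compute the Hessian of $P(\langle x,y\rangle) = x/y$ as a function of its two coordinates. The first partials are $\partial P/\partial x = 1/y$ and $\partial P/\partial y = -x/y^2$, hence
\[
\frac{\partial^2 P}{\partial x^2} = 0, \qquad
\frac{\partial^2 P}{\partial x\,\partial y} = -\frac{1}{y^2}, \qquad
\frac{\partial^2 P}{\partial y^2} = \frac{2x}{y^3},
\]
so that $\Tr(\bm{H}P)(\langle x,y\rangle) = 2x/y^3$. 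This is visibly non-constant over the region of interest (e.g.\ $0 < x \le y$, which is where the counts feeding $P^M$ live), since it varies with both $x$ and $y$.

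Next I would place this in the multi-entity setting. Writing the per-entity map as $P_i(\bm{x}) = x_i / y_i$, where $x_i, y_i$ are the two private counts associated with entity $i$, the Hessian-trace function is $\Tr(\bm{H}P_i)(\bm{x}) = 2x_i / y_i^3$. Picking any dataset with two entities $i \neq j$ whose pairs $\langle x_i, y_i \rangle$ and $\langle x_j, y_j\rangle$ yield $x_i/y_i^3 \neq x_j/y_j^3$ gives $\Tr(\bm{H}P_i)(\bm{x}) \neq \Tr(\bm{H}P_j)(\bm{x})$, and Corollary~\ref{cor:4} then delivers the conclusion that $\cM$ is not fair w.r.t.~$P$.

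The only delicate point — and the main, though minor, obstacle — is verifying that the hypotheses of Theorem~\ref{lem:fair_bound_allottments} (inherited by Corollary~\ref{cor:4}) actually hold here, namely that $P$ is at least twice differentiable so that the Taylor-expansion identity behind Equation~\eqref{eq:p6} is legitimate. This is fine because $x/y$ is smooth wherever $y \neq 0$ and census counts are strictly positive. I would also note in passing that $\bm{H}P$ has determinant $-1/y^4 < 0$, so the ratio is indefinite — neither convex nor concave — which places this corollary squarely within the paper's broader message that non-convex allotment rules are inherently unfair under input perturbation.
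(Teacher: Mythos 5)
Your proposal is correct and follows exactly the route the paper takes: the paper simply states that the result "is a direct consequence of Corollary~\ref{cor:4}," and you supply the Hessian-trace computation $\Tr(\bm{H}P_i) = 2x_i/y_i^3$ that the paper leaves implicit, which is the right verification of that corollary's hypothesis. The extra observations (smoothness for $y \neq 0$, indefiniteness of the Hessian) are accurate and harmless.
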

\noindent

The above is a direct consequence of Corollary \ref{cor:4}.

Figure
\ref{fig:impact_shape} (left) provides an illustration linked
to problem $P^M$.  It shows the original values
$\nicefrac{x^{sp}}{x^{s}}$ (blue circles) and the expected values of
the privacy-preserving counterparts (red crosses) of three counties;
from left to right: \emph{Loving county, TX}, where
$\nicefrac{x^{sp}}{x^{s}} \!=\! \nicefrac{4}{80} \!=\! 0.05$,
\emph{Terrell county, TX}, where $\nicefrac{x^{sp}}{x^{s}} \!=\!
\nicefrac{30}{600} \!=\! 0.05$, and \emph{Union county, NM}, where
$\nicefrac{x^{sp}}{x^{s}} \!=\! \nicefrac{160}{3305} \!=\! 0.0484$.
The length of the gray vertical line represents the absolute bias and
the dotted line marks a threshold value ($0.05$) associated with the
formula $P^M_i$.  While the three counties have (almost) identical
ratios values, they induce significant differences in absolute
bias. This is due to the difference in scale of the numerator (and
denominator), with smaller numerators inducing higher bias.


\paragraph{Thresholding Functions}

As discussed in Theorem \ref{thm:bias_theshold}, discontinuities
caused by indicator functions, including thresholding, may induce
unfairness. This is showcased in Figure~\ref{fig:impact_shape} (center)
which describes the same setting depicted in Figure~
\ref{fig:impact_shape} (left) but with the red line indicating the
variance of the noisy ratios.  Notice the significant differences in
error variances, with Loving county exhibiting the largest variance.
This aspect is also shown in Figure \ref{fig:p2motivation} where the
counties with ratios lying near the threshold value have higher
decisions errors than those whose ratios lies far from it.

\subsection{Predicates Composition}

The next result highlights the negative impact coming from the
composition of Boolean predicates. The following important result is
corollary of Theorem \ref{thm:and_unfairness} and provides a lower
bound on the fairness bound.

\begin{corollary}
\label{cor:and_unfair}
Let mechanism $\cM$ be $\alpha_k$-fair w.r.t.~to problem $P^k$ ($k\in\{1,2\}$). 
Then $\cM$ is $\alpha$-fair w.r.t.~problems $P = P^1 \lor P^2$ and $P = P^1 \land P^2$, with 
$\alpha > \max(\alpha_1, \alpha_2)$.
\end{corollary}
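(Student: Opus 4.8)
The plan is to derive the corollary directly from Theorem~\ref{thm:and_unfairness}, which already establishes that $\cM$ is $\alpha$-fair w.r.t.\ \emph{both} $P^1 \lor P^2$ and $P^1 \land P^2$ for the explicit bound in Equation~\eqref{thm5:eq}; so no case split between the two connectives is needed, and the only remaining task is to show that this particular $\alpha$ strictly exceeds $\max(\alpha_1,\alpha_2)$. First I would rewrite \eqref{thm5:eq} in terms of the maximum absolute biases by substituting back $\overline{B}^k = \alpha_k + \underline{B}^k$ for $k \in \{1,2\}$, which turns the bound into the symmetric form $\alpha = \overline{B}^1 + \overline{B}^2 - \overline{B}^1\overline{B}^2 - \underline{B}^1\underline{B}^2$.

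Next I would compare $\alpha$ with each $\alpha_k = \overline{B}^k - \underline{B}^k$ individually, rather than picking the larger one, so that the symmetry does the bookkeeping. A one-line cancellation gives
\[
\alpha - \alpha_1 = \overline{B}^2(1 - \overline{B}^1) + \underline{B}^1(1 - \underline{B}^2),
\qquad
\alpha - \alpha_2 = \overline{B}^1(1 - \overline{B}^2) + \underline{B}^2(1 - \underline{B}^1),
\]
so it suffices to show both right-hand sides are strictly positive. Since $\cM$ is non-trivial, every absolute bias lies in $[0,0.5]$, hence each factor $1-\overline{B}^k$ and $1-\underline{B}^k$ is at least $0.5>0$; and since the Laplace (or any genuine data-release) mechanism perturbs the inputs of both predicates, $\overline{B}^1,\overline{B}^2>0$. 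Thus each displayed difference is a strictly positive term plus a non-negative term, yielding $\alpha>\alpha_1$ and $\alpha>\alpha_2$, i.e.\ $\alpha>\max(\alpha_1,\alpha_2)$.

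The main obstacle is precisely the \emph{strictness}: from the range constraint $[0,0.5]$ alone the two differences are only guaranteed to be non-negative, and they degenerate to $0$ in the pathological case where the mechanism induces zero bias everywhere for one predicate ($\overline{B}^2=0$) while having a bias-free entity for the other ($\underline{B}^1=0$). I would dispatch this by stating explicitly the (already tacit) assumption that $\cM$ adds noise to the inputs of $P^1$ and $P^2$, so that $\overline{B}^k>0$ — which holds for all mechanisms considered in the paper — under which the strict bound holds unconditionally; without it one only recovers $\alpha \ge \max(\alpha_1,\alpha_2)$. It is worth remarking in the write-up that this corollary is the pessimistic counterpart of the XOR result: conjunction and disjunction can only amplify unfairness, whereas $\oplus$ preserves fairness.
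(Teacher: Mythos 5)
Your proof is correct and follows essentially the same route as the paper's: both start from the bound \eqref{thm5:eq} of Theorem~\ref{thm:and_unfairness}, form $\alpha-\alpha_k$, regroup into a sum of products, and argue positivity using non-triviality of $\cM$. Two details in your version are actually tighter than the paper's. First, your identity $\alpha-\alpha_1=\overline{B}^2(1-\overline{B}^1)+\underline{B}^1(1-\underline{B}^2)$ is exactly right, whereas the paper's intermediate expansion of $(\alpha_1+\underline{B}^1)(\alpha_2+\underline{B}^2)$ silently drops a $2\underline{B}^1\underline{B}^2$ term; this does not change the sign of the conclusion (the correct regrouping is $\underline{B}^1(1-\overline{B}^2)+\alpha_2(1-\alpha_1)+\underline{B}^2(1-\overline{B}^1)$, still nonnegative since $\overline{B}^k\le 0.5$), but your bookkeeping avoids the slip. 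Second, you correctly isolate where strictness comes from: your argument needs only $\overline{B}^1,\overline{B}^2>0$, while the paper's grouping asserts $\underline{B}^1(1-\alpha_2)>0$ and $\underline{B}^2(1-\alpha_1)>0$, i.e.\ it tacitly requires the \emph{minimum} absolute biases to be strictly positive --- a stronger condition. Both hold for the Laplace mechanism on the predicates considered, but your explicit remark that without such an assumption one only gets $\alpha\ge\max(\alpha_1,\alpha_2)$ is a genuine improvement in rigor.
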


\begin{proof}
The proof is provided for $P = P^1 \lor P^2$. The argument for the disjunctive 
case is similar to the following one.

First the proof shows that $\alpha > \alpha_1$. By \eqref{thm5:eq} of 
Theorem \ref{thm:and_unfairness}, it follows
\begin{subequations}
  \label{eq:21}
\begin{align}
        \alpha - \alpha_1 &=  \underline{B}^1  +\alpha_2 + \underline{B}^2 - (\alpha_1 +\underline{B}^1)(\alpha_2 \!+\! \underline{B}^2) \!-\! \underline{B}^1 \underline{B}^2 \\
        & = \underline{B}^1  +\alpha_2 + \underline{B}^2 -\alpha_1 \alpha_2 - \alpha_1 \underline{B}^2  - \alpha_2 \underline{B}^1 \\
        & = \underline{B}^1( 1-\alpha_2)  + \alpha_2( 1- \alpha_1) +  \underline{B}^2(1-\alpha_1).
\end{align}
\end{subequations}
Since $\cM$ is not trivial (by assumption), we have that
$0\leq \alpha_1, \alpha_2 < 0.5$. 
Thus: 
\begin{subequations}
\label{eq:22}
\begin{align}
&\underline{B}^1( 1-\alpha_2) >0 \\
&\alpha_2( 1- \alpha_1) \geq 0\\
&\underline{B}^2(1-\alpha_1) >0.
\end{align}
\end{subequations}
Combining the inequalities in \eqref{eq:22} above with Equation 
\ref{eq:21}, results in
$$
\alpha -\alpha_1 > 0,
$$ 
which implies that $\alpha > \alpha_1$. 
An analogous argument follows for $\alpha_2$.
Therefore, $\alpha > \alpha_1$ and $\alpha >\alpha_2$, 
which asserts the claim. 
\end{proof}

\noindent
Figure \ref{fig:impact_shape} (right) illustrates Corollary
\ref{cor:and_unfair}.  It once again uses the minority language
problem $P^M$.  In the figure, each dot represents the absolute bias
$|B^i_{P^M}(\cM, \bm{x})|$ associated with a selected county. Red and
blue circles illustrate the absolute bias introduced by mechanism
$\cM$ for problem $P^1(x^{sp}) \!=\!\mathbbm{1} \{ x^{sp} \!\geq\!
10^4\}$ and $P^2(x^{sp}, x^{spe}) \!=\! \mathbbm{1}\{
\frac{x^{spe}}{x^{sp}} \!>\!0.0131\}$ respectively.  The selected
counties have all similar and small absolute bias on the two
predicates $P^1$ and $P^2$.  However, when they are combined using
logical connector $\lor$, the resulting absolute bias increases
substantially, as illustrated by the associated green circles.

{
The following analyzes an interesting difference in errors 
based on the Truth values of the composing predicates $P^1$ and $P^2$,  and shows that the highest error is achieved when they both are True for $\land$ and when they both are False for $\lor$ connectors. 
This result may have strong implications in classification tasks.

\begin{theorem}
\label{pro:nice_prop}
 Suppose mechanism $\cM$ is fair w.r.t.~predicates $P^1$ and $P^2$, 
 and consider predicate $P = P^1 \land P^2$. 
 Let $|B_P(a,b)|$ denote the absolute bias for 
 $\cM$ w.r.t.~$P$ when predicate $P^1 = a$ and predicate 
 $P^1 = b$, for $a, b \in 
 \{\textsl{True}, \textsl{False}\}$.
 Then, 
 $|B_P(\textsl{True}, \textsl{True})| \geq |B_P(a, b)|$
  for any other $a, b \in \{\textsl{True}, \textsl{False}\}$.
 \end{theorem}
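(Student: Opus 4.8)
The plan is to reduce the claim to an elementary comparison of the four scalar quantities produced by Lemma~\ref{lem:and_formula}. First I would unpack the hypothesis that $\cM$ is fair (equivalently, $0$-fair) with respect to $P^1$ and $P^2$: by Definition~\ref{def:bias} this means the absolute bias $|B^i_{P^k}(\cM,\bx)|$ is independent of the entity $i$, so I can write $p \defeq |B_{P^1}|$ and $q \defeq |B_{P^2}|$ for these common values, and by the non-triviality assumption on $\cM$ we have $p,q \in [0,0.5)$. Substituting into the four cases of Lemma~\ref{lem:and_formula} gives $|B_P(\False,\False)| = pq$, $|B_P(\False,\True)| = p(1-q)$, $|B_P(\True,\False)| = (1-p)q$, and $|B_P(\True,\True)| = p + q - pq$.

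Next I would establish the three inequalities $p+q-pq \ge pq$, $p+q-pq \ge p(1-q)$, and $p+q-pq \ge (1-p)q$. The last two are immediate, being equivalent to $q \ge 0$ and $p \ge 0$ respectively. For the first, I would rewrite the difference as $p+q-2pq = p(1-2q) + q$ and invoke $q \le 1/2$ together with $p,q \ge 0$ to conclude it is nonnegative. Combining the three inequalities yields $|B_P(\True,\True)| \ge |B_P(a,b)|$ for every $a,b \in \{\True,\False\}$, which is exactly the statement.

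I do not expect a real obstacle; the only point needing care is making explicit that ``fair'' in the hypothesis forces the absolute bias to be \emph{equal} (not necessarily zero) across entities, so that $p$ and $q$ are well-defined constants, and that the bounds $p,q < 0.5$ are precisely what powers the first of the three comparisons (this is the same non-triviality lever used in Property~\ref{lem:increasing_func} and Theorem~\ref{thm:and_unfairness}). As an optional remark, the dual statement for $P = P^1 \lor P^2$---where $(\False,\False)$ becomes the maximizer---follows by the identical argument applied to Lemma~\ref{lem:or_formula}, or by the identity $P^1 \lor P^2 = \lnot(\lnot P^1 \land \lnot P^2)$ together with the observation that logical negation leaves the absolute bias unchanged.
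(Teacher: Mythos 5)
Your proposal is correct and follows essentially the same route as the paper: invoke fairness to make the absolute biases entity-independent constants in $[0,0.5)$, plug them into the four cases of Lemma~\ref{lem:and_formula}, and verify by elementary algebra that the $(\True,\True)$ case dominates. The only difference is cosmetic — you spell out the three comparisons (in particular $p+q-2pq = p(1-2q)+q \ge 0$) where the paper merely asserts the chain of inequalities, and you compare the maximizer against each case directly rather than via the paper's two-step chain.
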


\begin{proof}
Since $\cM$ is fair w.r.t.~to both predicates
$P^1$ and $P^2$, then, by (consequence of) Corollary \ref{cor:2}, 
$\forall i \in [n]$, $\cM$ absolute bias w.r.t.~ $P^1$ and $P^2$ is constant and bounded in $(0, \nicefrac{1}{2})$:
\begin{align*}
|B^i_{P^1}| = B_1 \in (0, 0.5); \qquad
|B^i_{P^2}| = B_2 \in (0, 0.5)
\end{align*} 
Given the bound above, by Lemma \ref{lem:and_formula} and for every $x \in \cX$, it is possible to derive the following sequence of relations between each combination of predicate truth values: 
\begin{subequations}
\begin{align}
    \eqref{lemma1:c4} &> \eqref{lemma1:c2}; \\
    \eqref{lemma1:c4} &> \eqref{lemma1:c3}; \\
    \eqref{lemma1:c2} &> \eqref{lemma1:c1}; \\
    \eqref{lemma1:c3} &> \eqref{lemma1:c1},  \\
\end{align}
\end{subequations}
It follows immediately that case \eqref{lemma1:c4} is the largest among all the other cases, concluding the proof.
\end{proof}

\begin{theorem}
\label{pro:nice_prop2}
 Suppose mechanism $\cM$ is fair w.r.t.~predicates $P^1$ and $P^2$, 
 and consider predicate $P = P^1 \lor P^2$. 
 Let $|B_P(a,b)|$ denote the absolute bias for 
 $\cM$ w.r.t.~$P$ when predicate $P^1 = a$ and predicate 
 $P^1 = b$, for $a, b \in 
 \{\textsl{True}, \textsl{False}\}$.
 Then, 
 $|B_P(\textsl{False}, \textsl{False})| \geq |B_P(a, b)|$
  for any other $a, b \in \{\textsl{True}, \textsl{False}\}$.
 \end{theorem}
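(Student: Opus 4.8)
The plan is to mirror the proof of Theorem~\ref{pro:nice_prop}, replacing Lemma~\ref{lem:and_formula} by Lemma~\ref{lem:or_formula}. First, since $\cM$ is fair with respect to both $P^1$ and $P^2$, Corollary~\ref{cor:2} (applied to the absolute-bias formulation of fairness, exactly as in the proof of Theorem~\ref{pro:nice_prop}) yields entity-independent constants $B_1, B_2$ with $|B^i_{P^1}| = B_1$ and $|B^i_{P^2}| = B_2$ for every $i \in [n]$; and the non-triviality assumption on $\cM$ forces $B_1, B_2 \in (0, 0.5)$. Fixing an arbitrary dataset $\bm{x} \in \cX$ and an arbitrary entity $i$, Lemma~\ref{lem:or_formula} then expresses $|B_P(a,b)| = \Pr{P_i(\tilde{\bm{x}}) \neq P_i(\bm{x})}$ for $P = P^1 \lor P^2$ (where $P^1=a$, $P^2=b$) in each of the four truth-value combinations:
\[
|B_P(\textsl{False},\textsl{False})| = B_1 + B_2 - B_1 B_2, \qquad
|B_P(\textsl{False},\textsl{True})| = (1-B_1)B_2,
\]
\[
|B_P(\textsl{True},\textsl{False})| = B_1(1-B_2), \qquad
|B_P(\textsl{True},\textsl{True})| = B_1 B_2 .
\]
So the statement reduces to showing that the first of these four numbers dominates the other three.

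This is then a short chain of elementary comparisons. Subtracting, $|B_P(\textsl{False},\textsl{False})| - |B_P(\textsl{False},\textsl{True})| = B_1 > 0$ and $|B_P(\textsl{False},\textsl{False})| - |B_P(\textsl{True},\textsl{False})| = B_2 > 0$, so the $(\textsl{False},\textsl{False})$ case already beats the two ``mixed'' cases. For the remaining case, $|B_P(\textsl{False},\textsl{True})| - |B_P(\textsl{True},\textsl{True})| = B_2(1-2B_1) > 0$ because $B_1 < 0.5$, which chains with the above to give $|B_P(\textsl{False},\textsl{False})| > |B_P(\textsl{True},\textsl{True})|$ (equivalently, $B_1(1-2B_2) > 0$ gives the same conclusion through the $(\textsl{True},\textsl{False})$ case). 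Hence $|B_P(\textsl{False},\textsl{False})| \geq |B_P(a,b)|$ for all $a,b \in \{\textsl{True},\textsl{False}\}$, which is the exact ``dual'' of $(\textsl{True},\textsl{True})$ being maximal for $\land$ in Theorem~\ref{pro:nice_prop}, consistent with De~Morgan's law.

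I do not expect a genuine obstacle here: the argument is routine once Lemma~\ref{lem:or_formula} is in hand. The single point worth flagging is that the strict inequalities rely on non-triviality of $\cM$ --- it is precisely $B_1, B_2 < 0.5$ that makes the factors $(1-2B_1)$ and $(1-2B_2)$ positive; at the boundary $B_1 = B_2 = 0.5$ several of the four cases tie, so the strict ordering degrades to a non-strict one (still sufficient for the ``$\geq$'' in the statement). If a more structural phrasing is preferred, Property~\ref{lem:increasing_func} can be cited to note that $B_1 + B_2 - B_1 B_2$ is the largest attainable value of the disjunction-error formula over $[0,0.5]^2$, but since the biases are constant across entities the explicit four-way comparison above is the cleanest route.
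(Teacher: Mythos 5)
Your proof is correct and follows exactly the route the paper intends: the paper's own "proof" of this theorem is just the remark that it is analogous to Theorem~\ref{pro:nice_prop}, i.e., constancy of the per-predicate absolute biases $B_1, B_2 \in (0,0.5)$ from fairness plus non-triviality, followed by the four-way case comparison, with Lemma~\ref{lem:or_formula} in place of Lemma~\ref{lem:and_formula}. Your explicit subtractions ($B_1$, $B_2$, and $B_2(1-2B_1)$) correctly verify the ordering and even spell out the elementary inequalities that the paper leaves implicit.
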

The proof follows an analogous argument to that used in the proof of Theorem \ref{pro:nice_prop}.

Figure \ref{fig:impact_group} illustrates this result on the 
Minority Language problem, 
with $P^1 = \frac{x_i^{sp}}{x_i^{s}} \!>\! 0.05$ 
$P^2 = x_i^{sp} \!>\! 10^4$, and $P^3 = \frac{x_{i}^{spe}}{x_{i}^{sp}} \!>\! 0.0131$.
It reports the decision errors on the y-axis (absolute bias). Notice that the red group is the 
most penalized in Figure \ref{fig:impact_group} (left)  and the 
least penalized in Figure \ref{fig:impact_group} (right).
\begin{figure*}[!t]
\centering
\includegraphics[height=120pt]{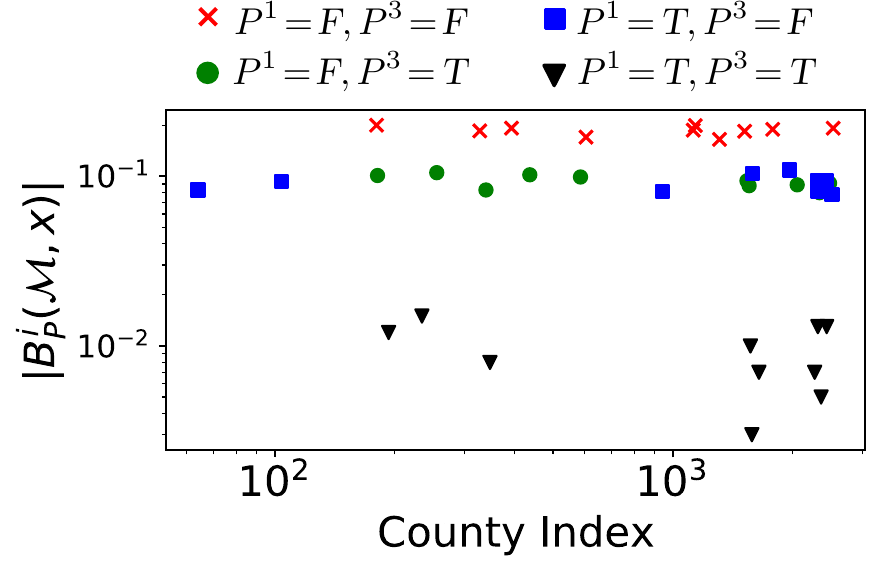}
\includegraphics[height=120pt]{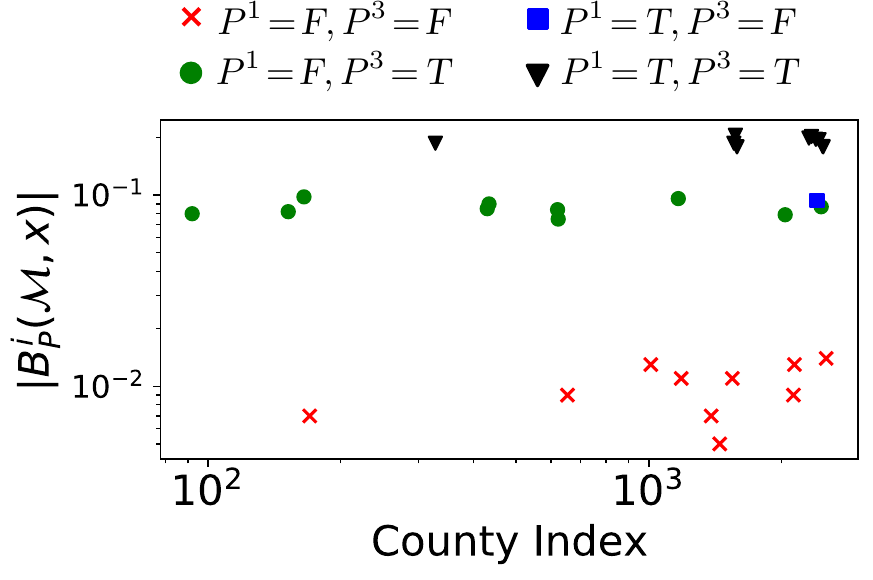}
\caption{Absolute bias (decision errors) in the Minority Language Problem: The errors are shown for four different groups of data corresponding to predicates $P = P^1 \lor P^2$ (left) 
and $P = P^1 \land P^3$ (right)}
\label{fig:impact_group}
\end{figure*}
}

\subsection{Post-Processing}

The final analysis of bias relates to the effect of post-processing
the output of the differentially private data release.  In particular,
the section focuses on ensuring non-negativity of the released
data. The discussion focuses on problem $\tfa$ but the results are,
once again, general. 



The section first presents a \emph{negative result}: the application
of a post-processing operator 
$$\text{PP}^{\geq \ell}(z) \defeq \max(\ell, z)$$ 
to ensure that the result is at least $\ell$ induces a
positive bias which, in turn, can exacerbate the disparity error of
the allotment problem.

\begin{theorem}
  \label{lem:exp_clipped_lap}
  Let $\tilde{x} = x + \Lap(\lambda)$, with scale $\lambda > 0$, 
  and $\hat{x} = \text{PP}^{\geq \ell}(\tilde{x})$, with $\ell < x$, 
  be its post-processed value. Then, 
  $$ 
    \EE[\hat{x}] = x + \frac{\lambda}{2} \exp(\frac{\ell - x}{\lambda} ).
  $$
\end{theorem}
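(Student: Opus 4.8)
The plan is to compute $\EE[\hat{x}]$ directly by conditioning on whether the Laplace perturbation pushes $\tilde{x}$ below the clipping threshold $\ell$ or not. Writing $\eta = \tilde{x} - x \sim \Lap(\lambda)$, we have $\hat{x} = \max(\ell, x + \eta)$, so
\[
\EE[\hat{x}] = \EE[(x+\eta)\,\mathbbm{1}\{x+\eta \geq \ell\}] + \ell \cdot \Pr(x + \eta < \ell).
\]
Since $\ell < x$, the event $\{x + \eta < \ell\}$ is the event $\{\eta < \ell - x\}$ with $\ell - x < 0$, which lies entirely in the left tail of the symmetric Laplace density. The strategy is to evaluate each of these two pieces using the closed form of the Laplace density $p(t) = \frac{1}{2\lambda}\exp(-|t|/\lambda)$, and then show the ``missing mass'' below $\ell$ and the constant $\ell$ that replaces it combine to give exactly the stated correction term $\frac{\lambda}{2}\exp\!\big(\frac{\ell-x}{\lambda}\big)$.

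Concretely, first I would write $\EE[\hat{x}] = x + \EE[\max(\ell - x,\, \eta)]$ by pulling out the deterministic $x$, which reduces the problem to computing $\EE[\max(a, \eta)]$ for the negative constant $a = \ell - x < 0$. Then $\EE[\max(a,\eta)] = \EE[\eta] + \EE[(a - \eta)\mathbbm{1}\{\eta < a\}] = \EE[(a-\eta)\mathbbm{1}\{\eta < a\}]$ since $\EE[\eta] = 0$. The remaining integral is
\[
\int_{-\infty}^{a} (a - t)\,\frac{1}{2\lambda} e^{t/\lambda}\, dt,
\]
which is an elementary integral: splitting into $a \int_{-\infty}^a \frac{1}{2\lambda} e^{t/\lambda} dt$ and $-\int_{-\infty}^a \frac{t}{2\lambda} e^{t/\lambda} dt$, the first evaluates to $\frac{a}{2} e^{a/\lambda}$ and the second (by parts) to $-\frac{a}{2}e^{a/\lambda} + \frac{\lambda}{2} e^{a/\lambda}$, so the sum is $\frac{\lambda}{2} e^{a/\lambda}$. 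Substituting back $a = \ell - x$ yields $\EE[\hat{x}] = x + \frac{\lambda}{2}\exp\!\big(\frac{\ell - x}{\lambda}\big)$, as claimed.

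There is no real obstacle here — the result is a direct computation — but the one place to be careful is the bookkeeping in the integration by parts and making sure the boundary term at $-\infty$ vanishes (it does, because $t e^{t/\lambda} \to 0$ as $t \to -\infty$). The use of $\ell < x$ is essential: it guarantees $a < 0$ so that the tail integral stays in the region where the density is $\frac{1}{2\lambda} e^{t/\lambda}$ (rather than straddling $0$, which would require splitting the integral and complicate the algebra). I would also remark that this exposes exactly why the post-processing is harmful for fairness: the bias it introduces, $\frac{\lambda}{2}\exp\!\big(\frac{\ell-x}{\lambda}\big)$, is strictly positive and decays exponentially in the scaled distance $\frac{x - \ell}{\lambda}$, so entities with small counts $x$ (close to the clipping threshold $\ell = 0$) suffer a disproportionately large upward bias, precisely the asymmetry observed empirically in Figure~\ref{fig:p1motivation}.
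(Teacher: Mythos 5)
Your proof is correct. You take the same basic route as the paper --- a direct computation of the expectation against the Laplace density --- but with a different and somewhat cleaner decomposition. The paper integrates $\max(\ell,\tilde{x})\,p(\tilde{x})$ over three regions $(-\infty,\ell)$, $(\ell,x)$, $(x,\infty)$, evaluating three separate integrals (one of which requires integration by parts) and then combining them. You instead center at $x$, write $\hat{x}=x+\max(a,\eta)$ with $a=\ell-x<0$, and use the identity $\max(a,\eta)=\eta+(a-\eta)\mathbbm{1}\{\eta<a\}$ together with $\EE[\eta]=0$ to reduce everything to a single left-tail integral $\int_{-\infty}^{a}(a-t)\tfrac{1}{2\lambda}e^{t/\lambda}\,dt=\tfrac{\lambda}{2}e^{a/\lambda}$. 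This buys you less bookkeeping (one integral instead of three, and the hypothesis $\ell<x$ is used transparently to keep the integral in the left tail where $|t|=-t$), and it isolates the bias as exactly the expected shortfall $(a-\eta)^+$ replaced by clipping, which makes the positivity and exponential decay of the bias in $(x-\ell)/\lambda$ immediate. Both arguments are elementary and arrive at the same expression; your closing remark about why this penalizes small counts matches the discussion the paper gives after the theorem.
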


\begin{proof}
The expectation of the post-processed value $\hat{x}$ is given by: 
\begin{subequations}
\begin{align}
        E[\hat{x}] &= \int_{-\infty}^{\infty}\!\! \max(\ell, \tilde{x}) p(\tilde{x}) \;d\tilde{x}  \\
        &= \int_{-\infty}^{\ell } \!\!\max(\ell, \tilde{x}) p(\tilde{x}) \;d\tilde{x}
        + \int_{\ell}^{x } \!\!\max(\ell, \tilde{x}) p(\tilde{x}) \;d\tilde{x}+ \int_{x}^{\infty }\!\!\max(\ell, \tilde{x}) p(\tilde{x}) \;d\tilde{x}, \label{eq:clipped_lap}
\end{align}
\end{subequations}
where $p(\tilde{x}) = \nicefrac{1}{2\lambda} \exp(-\nicefrac{|\tilde{x} -x|}{\lambda} )$ is the pdf of Laplace. The following computes separately the three terms in Equation  \ref{eq:clipped_lap}:
\begin{align}
    \int_{-\infty}^{\ell}\!\! \max(\ell, \tilde{x}) p(\tilde{x}) \;d\tilde{x} 
    & = \int_{-\infty}^{\ell}\!\! \ell\, p(\tilde{x}) \;d\tilde{x} = \ell \int_{-\infty}^{\ell}\!\!  p(\tilde{x}) \;d\tilde{x} 
     = \frac{1}{2}  \ell \exp( \frac{\ell -x}{\lambda} ) 
     \label{clip_eq:1}\\
  \int_{\ell}^{x }\!\! \max(\ell, \tilde{x}) p(\tilde{x}) \;d\tilde{x} 
  &= \int_{\ell}^{x }\!\! \tilde{x} p(\tilde{x}) \;d\tilde{x}  
=\frac{1}{2}(x- \lambda) -\frac{1}{2} (\ell-\lambda) \exp( \frac{\ell-x}{\lambda}) \label{clip_eq:2}\\
   \int_{x}^{\infty }\!\! \max(\ell, \tilde{x})  p(\tilde{x}) \;d\tilde{x}    
   &= \int_{x}^{\infty }\!\! \tilde{x} p(\tilde{x}) \;d\tilde{x}
    =\frac{1}{2} (x + \lambda).
\label{clip_eq:3}
\end{align}

Combining equations \eqref{clip_eq:1}--\eqref{clip_eq:3} with \eqref{eq:clipped_lap}, gives
$$ E[\hat{x}] = x + \frac{\lambda}{2} \exp(\frac{\ell-x}{\lambda} ).$$
\end{proof}

Lemma \ref{lem:exp_clipped_lap} indicates the presence of positive
bias of post-processed Laplace random variable when ensuring
non-negativity, and that such bias is $B^i(\cM, \bm{x}) =
\EE[\hat{x_i}] - x_i =  
\exp(\frac{\ell-x_i}{\lambda}) \leq \nicefrac{\lambda}{2} $ for $\ell
\leq x_i$.  As shown in Figure \ref{fig:p1motivation} the effect of
this bias has a negative impact on the final disparity of the
allotment problem, where smaller entities have the largest bias (in
the Figure $\ell=0$).

{

The remainder of the section discusses positive results for two additional classes of post-processing: (1) The integrality constraint program $\text{PP}^{\mathbb{N}}(z)$, which enforces the integrality of the released values, and (2) The sum-constrained  constrained program $\text{PP}^{\sum_S}(z)$, which enforces a linear constraint on the data. The following results show that these
post-processing steps do not contribute to further biasing the decisions.

\paragraph{Integrality Post-processing }

The integrality post-processing $\text{PP}^{\mathbb{N}}(z)$ is used when the released data are integral quantities. The following post-processing step, based on stochastic rounding produces integral quantities:
\begin{equation}
    \text{PP}^{\mathbb{N}}(z) = \begin{cases} \floor{z}  \ \mbox{w.p.:} \  1 -(z- \floor{z}) \\
                               \floor{z} +1 \ \ \mbox{w.p.:} \  z - \floor{z}
    \end{cases}
\end{equation}
It is straightforward to see that the above is an unbias estimator:
$\mathbb{E} [\text{PP}^{\mathbb{N}}(\tilde{x})] = \tilde{x} $ and thus, no it introduces no additional bias to $\text{PP}^{\mathbb{N}}(\tilde{x})$.

\paragraph{Sum-constrained Post-processing}
The sum-constrained post-processing  $\text{PP}^{\sum_S}(z)$ is expressed through the following constrained optimization problem:
\begin{equation}
\label{eq:sum_opt}
\min_{\hat{z} } \left\| \hat{z} - z \right\|^2_2\\ \ 
\mbox{s.t}: \ \bm{1}^T z = S
\end{equation}
This class of constraints is typically enforced when the private outcomes are required to match some fixed resource to distribute. For example, the outputs of the allotment problem $P^F$ should be such that the total budget is allotted, and thus $\sum_i P^F_i(\tilde{x}) = 1$. 

\begin{theorem}
\label{thm:sum_postopt}
Consider an $\alpha$-fair mechanism $\cM$ w.r.t.~problem $P$. Then $\cM$ is also $\alpha$-fair w.r.t.~problem $\text{PP}^{\sum_S}(P)$.
\end{theorem}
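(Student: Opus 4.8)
The plan is to exploit the fact that $\text{PP}^{\sum_S}$ is nothing but the Euclidean projection onto the affine hyperplane $\{\hat{z} : \bm{1}^T\hat{z} = S\}$, hence an \emph{affine} map of its argument with a very rigid structure: it returns the input corrected by a \emph{common} scalar across all coordinates. First I would solve \eqref{eq:sum_opt} in closed form. Introducing a Lagrange multiplier $\mu$ for the constraint $\bm{1}^T\hat{z} = S$, stationarity of $\|\hat z - z\|_2^2 - 2\mu(\bm 1^T\hat z - S)$ gives $\hat{z} = z + \mu\bm{1}$, and substituting back into the constraint yields $\mu = \frac{1}{n}\bigl(S - \sum_{j\in[n]} z_j\bigr)$. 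Therefore
\[
  \text{PP}^{\sum_S}(z)_i = z_i + \frac{1}{n}\Bigl(S - \sum_{j\in[n]} z_j\Bigr), \qquad i\in[n].
\]

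Next I would introduce the post-processed problem $Q_i(\bm{x}) \defeq \text{PP}^{\sum_S}(P(\bm{x}))_i = P_i(\bm{x}) + \frac{1}{n}\bigl(S - \sum_{j\in[n]} P_j(\bm{x})\bigr)$ and compute its bias. Using the definition \eqref{eq:bias} and linearity of expectation,
\begin{align*}
  B_Q^i(\cM,\bm{x})
  &= \EE_{\tilde{\bm{x}}\sim\cM(\bm{x})}\bigl[Q_i(\tilde{\bm{x}})\bigr] - Q_i(\bm{x}) \\
  &= \Bigl(\EE[P_i(\tilde{\bm{x}})] - P_i(\bm{x})\Bigr) - \frac{1}{n}\sum_{j\in[n]}\Bigl(\EE[P_j(\tilde{\bm{x}})] - P_j(\bm{x})\Bigr) \\
  &= B_P^i(\cM,\bm{x}) - \frac{1}{n}\sum_{j\in[n]} B_P^j(\cM,\bm{x}).
\end{align*}
The key observation is that the correction term $\bar{B} \defeq \frac{1}{n}\sum_{j\in[n]} B_P^j(\cM,\bm{x})$ is \emph{the same for every entity} $i$, i.e.\ $B_Q^i = B_P^i - \bar{B}$ for all $i\in[n]$.

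Finally I would conclude by noting that this common additive shift cancels inside the disparity error: for every $i\in[n]$,
\begin{align*}
  \xi_B^i(Q,\cM,\bm{x})
  &= \max_{j\in[n]}\bigl|B_Q^i(\cM,\bm{x}) - B_Q^j(\cM,\bm{x})\bigr|
   = \max_{j\in[n]}\bigl|(B_P^i - \bar{B}) - (B_P^j - \bar{B})\bigr| \\
  &= \max_{j\in[n]}\bigl|B_P^i(\cM,\bm{x}) - B_P^j(\cM,\bm{x})\bigr|
   = \xi_B^i(P,\cM,\bm{x}) \le \alpha,
\end{align*}
where the last inequality is the hypothesis that $\cM$ is $\alpha$-fair w.r.t.~$P$. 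Since this holds for all $\bm{x}\in\cX$, $\cM$ is $\alpha$-fair w.r.t.~$\text{PP}^{\sum_S}(P)$.

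I do not expect a serious obstacle: the only points requiring care are (i) deriving the closed form of the projection correctly (a one-line Lagrangian computation), and (ii) recalling that $\text{PP}^{\sum_S}$ is meant for allotment problems, so the relevant quantity is the \emph{signed} bias $B_P^i$ — the cancellation genuinely uses the signed version, not the absolute one. It is also worth remarking that the argument would fail for a generic affine post-processing $z\mapsto Az+b$, since then $B_Q^i = \sum_j A_{ij}B_P^j$ need not differ across $i$ by a constant; what makes the proof work is precisely that $\text{PP}^{\sum_S}$ acts as the identity plus a correction shared by all entities.
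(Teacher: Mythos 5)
Your proof is correct and follows essentially the same route as the paper: derive the closed form of the projection (identity plus the common scalar correction $\frac{1}{n}(S-\sum_j z_j)$), observe that the bias of each entity therefore shifts by the same amount $\bar B = \frac{1}{n}\sum_j B_P^j$, and conclude that pairwise bias differences --- hence the disparity error --- are unchanged. The paper writes the shifted bias as $\frac{1}{n}\sum_{j\neq i}(B_P^i - B_P^j)$, which is algebraically identical to your $B_P^i - \bar B$, so the two arguments coincide.
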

The following relies on a result by Zhu et al.~\cite{zhu:20_postdp}. 

\begin{proof}
Denote $z_i$ and $\hat{z}_i$ as for $= P_i(\bm{\tilde{x}})$ and 
$\text{PP}^{\sum_S}(z_i)$, respectively.
Note that problem \eqref{eq:sum_opt} is convex and its unique minimizer is $\hat{z}_i = z_i + \eta$ with $\eta = \frac{S - \sum_{i} z_i}{n}$. Its expected value is:
\begin{subequations}
\begin{align}
  \mathbb{E} [\hat{z}_i] 
  &= \mathbb{E}[z_i + \eta] = \mathbb{E}\left[z_i + \frac{S -\sum_{j\neq i} z_j}{n} \right]\\
  &= \frac{n-1}{n} \mathbb{E}\left[z_i\right] - \frac{1}{n} 
    \sum_{j \neq i} \mathbb{E}\left[z_{j}\right]+\frac{S}{n}\\
  &= \frac{n-1}{n} \left(z_i + B^i_P \right) - 
    \frac{1}{n}  \left( \sum_{j \neq i} z_j  + B^j_P \right)+ 
    \frac{S}{n}\\
  &= \frac{n-1}{n} \left( z_i + B^i_P \right) - \frac{1}{n}  
  \left(S - z_i  +\sum_{j \neq i}  B^j_P \right) + \frac{S}{n}\\
  &=  z_i + \frac{\sum_{j \neq i} \left(B^i_P - B^j_P \right)}{n}.
\end{align}
\end{subequations}
The above follows from linearity of expectation and the last equality indicates that the bias of entity $i$ under sum-constrained post-processing is $B^i_{ \text{PP}^{\sum_S}(P)} = \frac{\sum_{j \neq i} (B^i_P - B^j_P)}{n}  $. 
Thus, the fairness bound  $\alpha'$ attained after post-processing is:
\begin{subequations}
\begin{align}
    \alpha' 
    &= \max_{i,k} \left| B^i_{ \text{PP}^{\sum_S}(P)} - B^k_{ \text{PP}^{\sum_S}(P)} \right| \\
    &= \max_{i,k} \frac{\sum_{j \neq i} (B^i_P - B^j_P)}{n} - \frac{\sum_{j \neq k} (B^k_P - B^j_P)}{n} \\
    &= \max_{i,k}  |B^i_P - B^k_P| = \alpha
\end{align}
\end{subequations}
Therefore, the sum-constrained post-processing does not introduce additional unfairness to mechanism $\cM$.
\end{proof}
}

\paragraph{Discussion} The results highlighted in this section are both surprising 
and significant. They show that {\em the motivating allotment problems
  and decision rules induce inherent unfairness when given as input
  differentially private data}. This is remarkable since the resulting
decisions have significant societal, economic, and political impact on
the involved individuals: federal funds, vaccines, and therapeutics 
may be unfairly allocated, minority language voters may be
disenfranchised, and congressional apportionment may not be fairly
reflected. The next section identifies a set of guidelines to mitigate
these negative effects.

\section{Mitigating Solutions}
\subsection{The Output Perturbation Approach} 

This section proposes three guidelines that may be adopted to mitigate 
the unfairness effects presented in the paper, with focus on the motivating 
allotments problems and decision rules.

A simple approach to mitigate the fairness issues discussed is to 
recur to \emph{output perturbation} to randomize the outputs of problem 
$P_i$, rather than its inputs, using an unbiased mechanism. 
Injecting noise directly after the computation of the outputs 
$P_i(\bm{x})$, ensures that the result will be unbiased. However, 
this approach has two shortcomings. First, 
it is not applicable to the context studied in this paper, where a data 
agency desires to release a privacy-preserving data set $\tilde{\bm{x}}$ 
that may be used for various decision problems. 
Second, computing the sensitivity of the problem $P_i$ may be hard, it 
may require to use a conservative estimate, or may even be impossible, 
if the problem has unbounded range. 
A conservative sensitivity implies the introduction of significant 
loss in accuracy, which may render the decisions unusable in practice.

\subsection{Linearization by Redundant Releases}
A different approach considers modifying on the decision problem $P_i$
itself.  Many decision rules and allotment problems are designed in an
ad-hoc manner to satisfy some property on the original data, e.g.,
about the percentage of population required to have a certain
level of education.  Motivated by Corollaries \ref{cor:1} and
\ref{cor:2}, this section proposes guidelines to modify the original
problem $P_i$ with the goal of reducing the unfairness effects introduced
by differential privacy. 

The idea is to use a linearized version $\bar{P}_i$ of problem $P_i$. 
While many linearizion techniques exists \cite{rebennack2020piecewise}, and are often 
problem specific, the section focuses on a linear proxy 
$\bar{P}_i^F$ to problem $\tfa_i$ that can be obtained by enforcing a 
redundant data release. While the discussion focuses on problem $\tfa_i$, 
the guideline is general and applies to any allotment problem with similar 
structure. 

Let $Z = \sum_i a_i x_i$. Problem $\tfa_i(\bm{x}) \!=\! \nicefrac{a_i x_i}{Z}$ 
is linear w.r.t.~the inputs $x_i$ but non-linear w.r.t.~$Z$. 
However, releasing $Z$, in addition to releasing the privacy-preserving 
values $\tilde{\bm{x}}$, would render $Z$ a constant rather than a problem 
input to $\tfa$.
To do so, $Z$ can either be released publicly, at cost of a (typically small) 
privacy leakage or by perturbing it with fixed noise. The resulting 
linear proxy allocation problem $\bar{P}^F_i$ is thus linear in the inputs $\bm{x}$. 

\begin{figure}
\centering
    \includegraphics[width=0.85\linewidth]{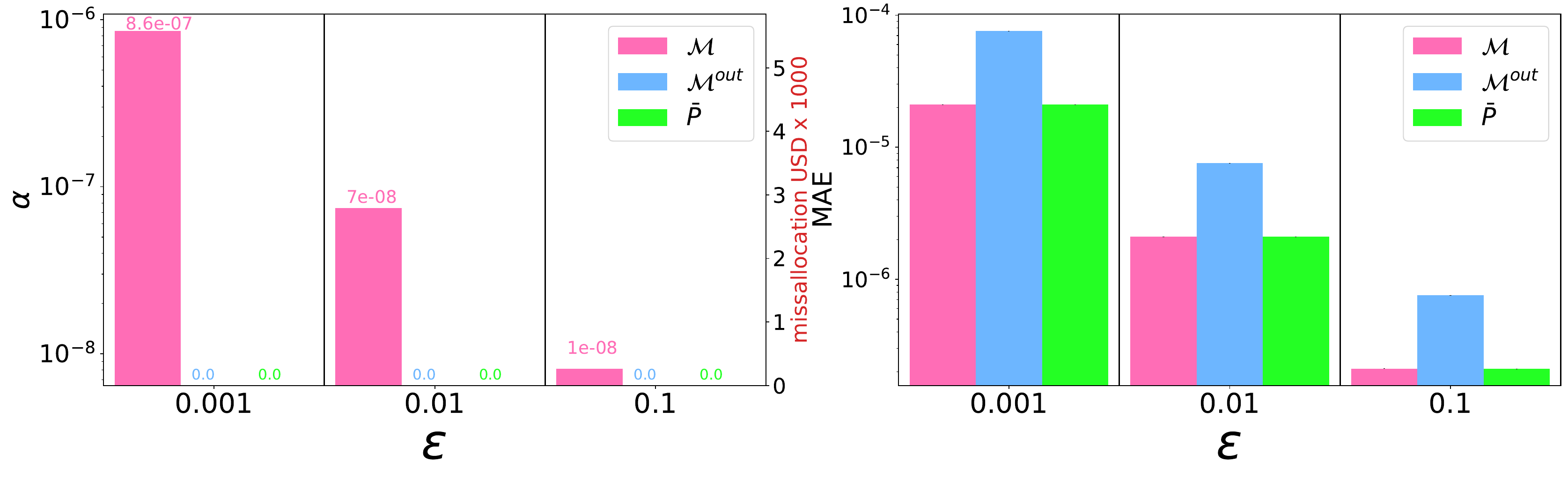}
    \caption{Linearization by redundant release: Fairness and error comparison.
    }
    \label{fig:p2_solutions}
\end{figure}
Figure \ref{fig:p2_solutions} illustrates this approach in practice.
The left plot shows the fairness bound $\alpha$ and the right plot
shows the empirical mean absolute error $\frac{1}{m}\sum_{k=1}^{m}
|P_i(\bm{x}^k) - P_i(\tilde{\bm{x}}^m)|$, obtained using $m=10^4$
repetitions, when the DP data $\tilde{\bm{x}}$ is applied to (1) the
original problem $P$, (2) its linear proxy $\bar{P}$, and (3) when
output perturbation (denoted $\cM^{\text{out}}$) is adopted.  The
number on top of each bar reports the fairness bounds, and emphasize
that the proposed remedy solutions achieve perfect-fairness.  Notice
that the proposed linear proxy solution can reduce the fairness
violation dramatically while retaining similar errors.
While  the output perturbation method reduces the disparity error, it also incurs
significant errors that make the approach rarely usable in practice.

{
\paragraph{Learning Piece-wise Linear proxy-functions}

Due to the discontinuities arising in decision rules (see for example
problem $\eqref{p:coverage}$), it is substantially more challenging to develop mitigation strategies than in allotment problems. In
particular, the discontinuities present in problem $P^M$ render the
use of linear proxies ineffective.

The following strategy combines two ideas: (1) partitioning, in a privacy-preserving fashion, the original problem
into subproblems that are locally continuous and amenable to
linearizations with low accuracy loss, and (2) the systematic learning of linear proxies. 
More precisely, the idea is to partition the input values $\bm{x}$ into several groups $\bm{x}_1, \ldots \bm{x}_G$ (e.g.,
individuals from the same state or from states of similar magnitude)
and to approximate subproblem $P^M_i(\bm{x}_k)$ with a linear proxy
$\bar{P}^M_i(\bm{x}_k)$ for each group $k \in [G]$.  The resulting
problem $\bar{P}^M_i$ then becomes a piecewise linear function that
approximates the original problem $P^M_i$. 

Rather than using an ad-hoc method to linearize problem $P^M$, the paper proposes to obtain it by fitting a linear model to the 
 data  $\bm{x}_k$ of each group $k \in [G]$. 
Figure \ref{fig:p1_piecewise_private} presents results for problem $P^M$. Each subgroup is trained using features $\{x^{spe},
x^{sp}, x^s\}$ and the resulting model coefficients are used to
construct the proxy linear function for the subproblems
$\bar{P}^M_i(\bm{x}_G)$.  The results use the value
$x^{sp}$ to partition the dataset into $9$ groups of approximately
equal size. To ensure privacy, the grouping is executed using
privacy-preserving $x^{sp}$ values. 
Figure \ref{fig:p1_piecewise_private} compares the original problem $P$, a proxy-model $\bar{P}_{LR}$ whose pieces are learned using linear
regression (LR), and a proxy model $\bar{P}_{SVM}$ whose pieces are
learned using a linear SVM model. All three problems take as input the
private data $\tilde{\bm{x}}$ and are compared with the original
version of the problem $P$.  The x-axis shows the range of $x^{sp}$
that defines the partition, while the y-axis shows the fairness bound
$\alpha$ computed within each group. \emph{The positive effects of the
  proposed piecewise linear proxy problem are dramatic}. The fairness
violations decrease significantly when compared to those obtained by
the original model. the fairness violation of the SVM model is
typically lower than that obtained by the LR model, and this may be
due to the accuracy of the resulting model -- with SVM reaching higher
accuracy than LR in our experiments. Finally, as the population size
increases, the fairness bound $\alpha$ decreases and emphasizes
further the largest negative impact of the noise on the smaller
counties.

\begin{figure}[tb!]
\centering
  \includegraphics[width=\textwidth]{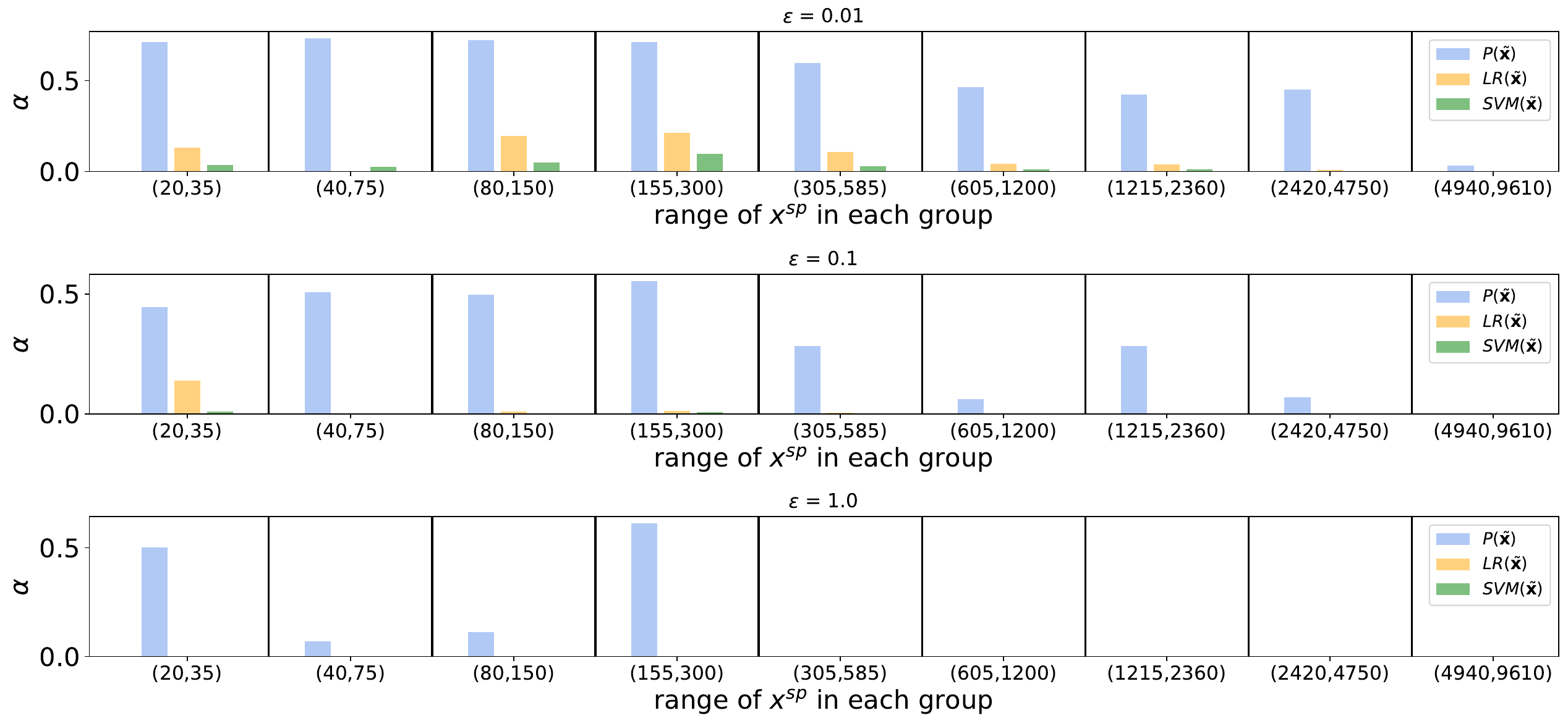}
    \caption{Linearization by redundant release: Fairness comparison.}
    \label{fig:p1_piecewise_private}
\end{figure}
}

\smallskip
It is important to note that the experiments above use a data release 
mechanism $\cM$ that applies no post-processing. A discussion about the 
mitigating solutions for the bias effects caused by post-processing 
is presented next. 

\subsection{Modified Post-Processing}

This section introduces a simple, yet effective, solution to mitigate 
the negative fairness impact of the non-negative post-processing. 
The proposed solution operates in 3 steps:
It first (1) performs a non-negative post-processing of the privacy-preserving 
input $\tilde{x}$ to obtain value $\bar{x} \!=\! \text{PP}^{\geq \ell}(\tilde{x})$.
Next, (2) it computes $\bar{x}_T \!=\! \bar{x} \!-\! \frac{T}{\bar{x} +1 - \ell}$.
Its goal is to correct the error introduced by the post-processing operator, 
which is especially large for quantities near the boundary $\ell$. Here $T$ is a 
\emph{temperature} parameter that controls the strengths of the correction.
This step reduces the value $\bar{x}$ by quantity $\frac{T}{ \bar{x} + 1 -\ell}$. 
The effect of this operation is to reduce the expected value 
$\mathbb{E}[\bar{x}]$ by larger (smaller) amounts as $x$ get closer (farther) 
to the boundary value $\ell$.  Finally, (3) it ensures that the final estimate 
is indeed lower bounded by $\ell$, by computing $\hat{x} = \max(\bar{x}_T, \ell)$.

The benefits of this approach are illustrated in Figure \ref{fig:remedy_pos},
which show the absolute bias $|B^i_{\tfa}|$ for the Title 1 fund allocation 
problem that is induced by the original mechanism $\cM$ with standard post-processing 
$\text{PP}^{\geq 0}$ and by the proposed modified post-processing for different
temperature values $T$.
\begin{figure}[tb!]
\centering  \includegraphics[width=\linewidth]{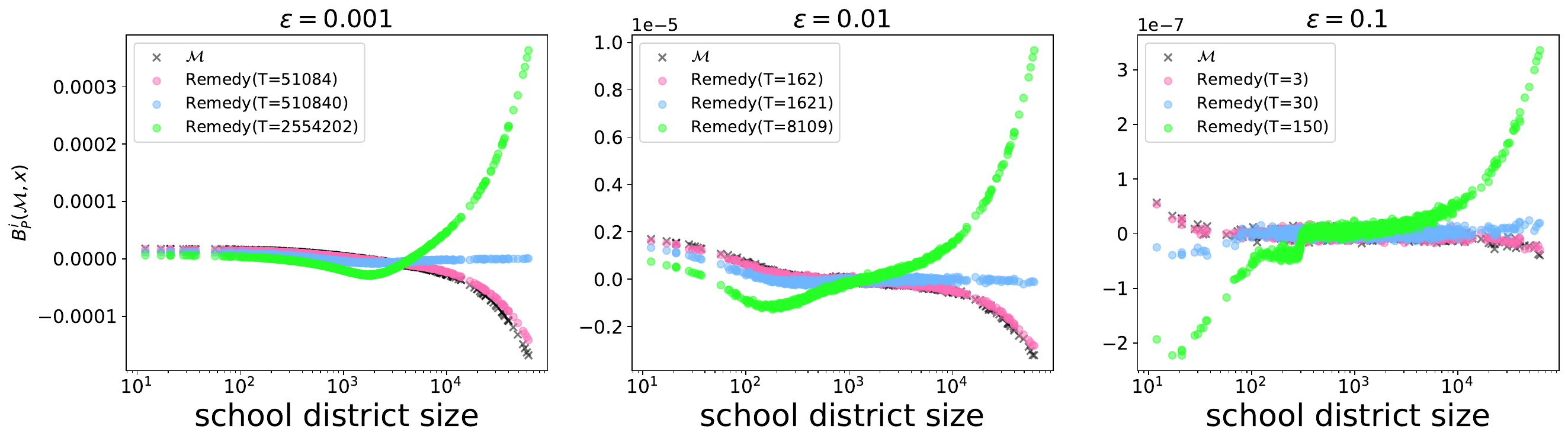}
    \caption{Modified post-processing: Unfairness reduction.}
    \label{fig:remedy_pos}
\end{figure}
The figure illustrates the role of the temperature $T$ in the disparity errors.
Small values $T$ may have small impacts in reducing the disparity errors, while
large $T$ values can introduce errors, thus may exacerbate unfairness.
The optimal choice for $T$ can be found by solving the following:
\begin{align}
    T^* = \argmin_{T} \big( \max_{\bm{x} \geq \ell} 
    |\mathbb{E}[\hat{\bm{x}}_T] - \bm{x}| -\min_{\bm{x} \geq \ell}|\mathbb{E}[\hat{\bm{x}}_T] - \bm{x}| \big),
\end{align}
where $\hat{\bm{x}}_T$ is a random variable obtained by the proposed 3 step 
solution, with temperature $T$. The expected value of $\hat{\bm{x}}$ can be approximated via sampling.
Note that naively finding the optimal $T$ may require access to 
the true data. Solving the problem above in a privacy-preserving way is beyond
the scope of the paper and the subject of future work.

The reductions in the fairness bound $\alpha$ for problem $\tfa$ are
reported in Figure \ref{fig:remedy_pos_2} (left), while Figure
\ref{fig:remedy_pos_2} (right) shows that this method has no
perceptible impact on the mean absolute error. Once again, these
errors are computed via sampling and use $10^4$ samples.
\begin{figure}[tb!]
  \centering 
  \includegraphics[width=0.8\linewidth]{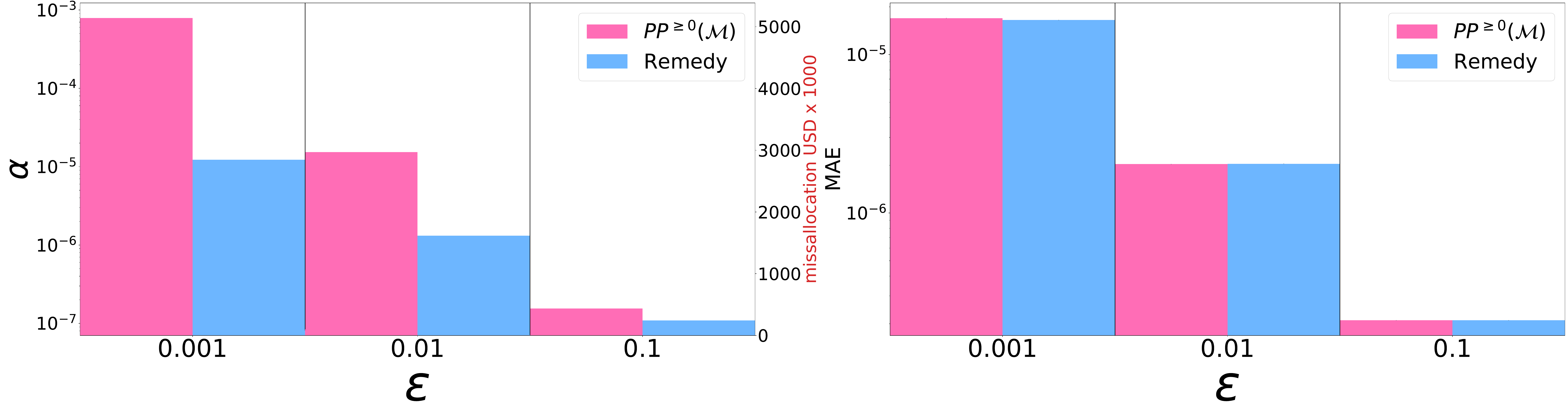}
    \caption{Modified post-processing on problem $\tfa$.}
    \label{fig:remedy_pos_2}
\end{figure}

\subsection{Fairness Payment}

Finally, this section focuses on allotment problems, like $\tfa$, that
distribute a budget $B$ among $n$ entities, and where the allotment
for entity $i$ represents the fraction of budget $B$ it
expects. Differential privacy typically implements a postprocessing
step to renormalize the fractions so that they sum to 1. This
normalization, together with nonnegativity constraints, introduces a
bias and hence more unfairness. One way to alleviate this problem is
to increase the total budget $B$, and avoiding the normalization.
This section quantifies the cost of doing so: it defines the \emph{cost of
  privacy}, which is the increase in budget $B^+$ required to achieve
this goal.

\begin{definition}[Cost of Privacy]
Given problem $P$, that distributes budget $B$ among $n$ entities, 
data release mechanism $\cM$, and dataset $\bm{x}$, the cost of privacy is: 
\[
  B^+ = 
   \textstyle\sum_{i \in I^-} |B^i_P(\cM, \bm{x})| \times B
\]
with $I^- = \{i \, : \, B^i_P(\cM, \bm{x}) < 0\}$.
\end{definition}

\noindent Figure \ref{fig:privacy_cost} illustrates the cost of privacy, 
in USD, required to render each county in the state of New York not 
negatively penalized by the effects of differential privacy. 
The figure shows, in decreasing order, the different costs associated 
with a mechanism $P^F(\text{PP}^{\geq 0} (\bm{x}))$ that applies a post-processing step, 
one $P^F(\bm{x})$ that does not apply post-processing, and one that uses
a linear proxy problem $\bar{P}^F(\bm{x})$. 


\begin{figure}[tb!]
	\centering
    \includegraphics[width=0.6\linewidth]{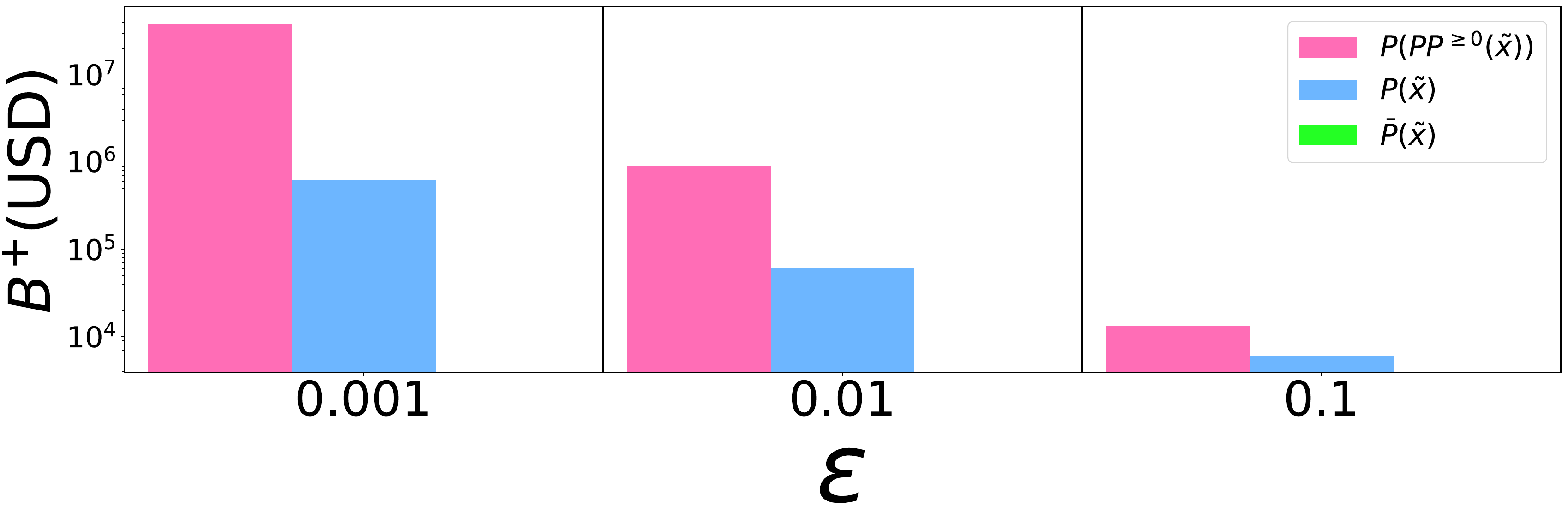}
    \caption{Cost of privacy on problem $\tfa$.}
    \label{fig:privacy_cost}
\end{figure}

\section{Related Work}
The literature on DP and algorithmic fairness 
is extensive and the reader is referred to, respectively, 
\cite{Dwork:13,vadhan2017complexity,dwork2017exposed} and \cite{barocas2017fairness,mehrabi2019survey} 
for surveys on these topics.
However, privacy and fairness have been studied mostly in isolation
with a few exceptions. Cummings et al.~\cite{cummings:19} consider the
tradeoffs arising between differential privacy and equal opportunity,
a fairness concept that requires a classifier to produce equal true
positive rates across different groups. They show that there exists no
classifier that simultaneously achieves $(\epsilon,0)$-differential
privacy, satisfies equal opportunity, and has accuracy better than a
constant classifier.  Ekstrand et al.~\cite{ekstrand:18} raise
questions about the tradeoffs involved between privacy and fairness,
and Jagielski et al.~\cite{jagielski:18} shows two algorithms that
satisfy $(\epsilon,\delta)$-differential privacy and equalized odds.
Although it may sound like these algorithms contradict the
impossibility result from \cite{cummings:19}, it is important to note
that they are not considering an $(\epsilon, 0)$-differential privacy
setting. Tran et al.~\cite{Tran:AAAI21} developed a differentially private
learning approach to enforce several group fairness notions using a
Lagrangian dual method. 
Zhu et al.~\cite{Zhu:AAAI21} studied the bias and variance induced by several important classes of post-processing and that the resulting bias can also have some disproportionate impact on the outputs. 
Pujol et al.~\cite{pujol:20} were seemingly
first to show, empirically, that there might be privacy-fairness
tradeoffs involved in resource allocation settings. In particular,
for census data, they show that the noise added to achieve
differential privacy could disproportionately affect some groups over
others. 
This paper builds on these empirical observations and provides a step
towards a deeper understanding of the fairness issues arising when
differentially private data is used as input to decision problems. 
This work is an extended version of \cite{Tran:IJCAI21}.

\section{Conclusions}
This paper analyzed the disparity arising in decisions granting 
benefits or privileges to groups of people when these decisions 
are made adopting differentially private statistics about these groups.
It first characterized the conditions for which allotment 
problems achieve finite fairness violations and bound the fairness 
violations induced by important components of decision rules, including
reasoning about the composition of Boolean predicates under logical operators. 
Then, the paper analyzed the reasons for disparity errors arising in 
the motivating problems and recognized the problem structure, the predicate
composition, and the mechanism post-processing, as paramount to the bias 
and unfairness contribution. 
Finally, it suggested guidelines to act on the decision problems
or on the mechanism (i.e., via modified post-processing steps) to mitigate
the unfairness issues. 
The analysis provided in this paper may provide useful guidelines 
for policy-makers and data agencies for testing the fairness and bias 
impacts of privacy-preserving decision making.



\bibliographystyle{plain}
\bibliography{lib}
\appendix

\section{Missing Proofs}

\begin{proof}[Proof of Lemma \ref{lem:and_formula}]
The proof proceeds by cases.\\
Case \ref{lemma1:c1}:
$P^1_i(\bx) =\False; P^2_i(\bx) = \False$, therefore 
$P_i(\bx) = P^1_i(\bx) \land P^2_i(\bx) = \False$ and,
\begin{subequations}
\begin{align}
     \Pr{P_i(\tilde{\bx})  \neq  P_i(\bx)}
     & = \Pr{ P^1_i(\tilde{\bx}) \land P^2_i(\tilde{\bx})  \neq \False}  \\ 
     & = \Pr{ P^1_i(\tilde{\bx}) \land P^2_i(\tilde{\bx})  = \True } \\
     & = \Pr{ P^1_i(\tilde{\bx}) = \True \land   P^2_i(\tilde{\bx}) = \True} \\
     \label{eq:p3_1}
     & = \Pr {P^1_i(\tilde{\bx}) = \True} \cdot  \Pr {P^2_i(\tilde{\bx}) = \True)}\\
     & = \Pr {P^1_i(\tilde{\bx}) \neq P^1_i(\bx)}  \cdot   
         \Pr {P^2_i(\tilde{\bx}) \neq P^2_i(\bx)} \\
     & = |B^i_{P^1}| |B^i_{P^2}| 
 \end{align}
 \end{subequations}
Where equation \eqref{eq:p3_1} is due to $P^1_i \independent P^2_i$.
 
\noindent 
Case \ref{lemma1:c2}: 
$P^1_i(\bx) =\False; P^2_i(\bx) = \True$, therefore $P_i(\bx) = P^1_i(\bx) \land P^2_i(\bx) = \False$, and
\begin{subequations}
 \begin{align}
      \Pr{P_i(\tilde{\bx})  \neq  P_i(\bx)}  
      & = \Pr{ P^1_i(\tilde{\bx}) \land P^2_i(\tilde{\bx}) \neq \False} \\  
      & = \Pr {P^1_i(\tilde{\bx}) \land P^2_i(\tilde{\bx})  = \True } \\
      & = \Pr {P^1_i(\tilde{\bx}) = \True \land  P^2_i(\tilde{\bx}) = \True } \\
      & = \Pr {P^1_i(\tilde{\bx}) = \True)} \cdot  \Pr {P^2_i(\tilde{\bx}) = \True)}\\
     \label{eq:p3_2}
      & = \Pr {P^1_i(\tilde{\bx}) \neq P^1_i(\bx)}  \cdot   
           \Pr{P^2_i(\tilde{\bx}) = P^2_i(\bx)} \\
      & = \Pr {P^1_i(\tilde{\bx}) \neq P^1_i(\bx)}  \cdot    
         \left( 1- \Pr{P^2_i(\tilde{\bx}) \neq P^2_i(\bx)} \right) \\
      & = |B^i_{P^1}| \left(1-|B^i_{P^2}|\right)
      \end{align}
 \end{subequations} 
 Where equation \eqref{eq:p3_2} is due to $P^1_i \independent P^2_i$.
     
Case \ref{lemma1:c3}:
$P^1_i(\bx) = \True; P^2_i(\bx) = \False$, therefore $P_i(\bx)) = P^1_i(\bx) \land P^2_i(\bx) = \False$, and 
\begin{subequations}
 \begin{align}
      \Pr{P_i(\tilde{\bx})  \neq  P_i(\bx) }  
      & = \Pr {P^1_i(\tilde{\bx}) \land P^2_i(\tilde{\bx})  \neq \False} \\ 
      & = \Pr {P^1_i(\tilde{\bx}) \land P^2_i(\tilde{\bx})  = \True } \\
      & = \Pr {P^1_i(\tilde{\bx}) = \True \land   P^2_i(\tilde{\bx}) = \True} \\
      \label{eq:p3_3}
      & = \Pr {P^1_i(\tilde{\bx}) =\True} \cdot  \Pr{P^2_i(\tilde{\bx}) = \True}\\
      & = \Pr {P^1_i(\tilde{\bx}) = P^1_i(\bx)}  \cdot   \Pr{P^2_i(\tilde{\bx}) \neq P^2_i(\bx)}\\
      & = \left(1- \Pr{P^1_i(\tilde{\bx}) \neq P^1_i(\bx)} \right) \cdot   
                   \Pr{P^2_i(\tilde{\bx}) \neq P^2_i(\bx)}\\
      & = \left(1-|B^i_{P^1}|\right)|B^i_{P^2}|
    \end{align}
 \end{subequations} 
 Where equation \eqref{eq:p3_3} is due to $P^1_i \independent P^2_i$.

Case \ref{lemma1:c4}:
$P^1_i(\bx) = \True; P^2_i(\bx) = \True$, therefore $P_i(\bx) = P^1_i(\bx) \land P^2_i(\bx) = \True$, and 
\begin{subequations}
\begin{align}
      \Pr{P_i(\tilde{\bx})  \neq  P_i(\bx)}  
     & = \Pr { P^1_i(\tilde{\bx}) \land P^2_i(\tilde{\bx})  \neq \True }  \\ 
     & = \Pr { P^1_i(\tilde{\bx}) \land P^2_i(\tilde{\bx})  = \False } \\
     & = 1 - \Pr{ P^1_i(\tilde{\bx}) = \True \land P^2_i(\tilde{\bx}) = \True} \\
     \label{eq:p3_4}
     & = 1 - \Pr{ P^1_i(\tilde{\bx}) = \True} \Pr{P^2_i(\tilde{\bx}) = \True}\\
     & = 1 - \left(1 - \Pr{P^1_i(\tilde{\bx}) \neq P^1_i(\bx)} \right) 
              \left(1 - \Pr{P^2_i(\tilde{\bx}) \neq P^2_i(\bx)}\right) \\
     & = 1 - \left(1 - |B^i_{P^1}|\right)\left(1-|B^i_{P^2}|\right) \\
     & = |B^i_{P^1}| +|B^i_{P^2}| - |B^i_{P^1}| |B^i_{P^2}|   \label{and_1}
 \end{align}
\end{subequations}
 Where equation \eqref{eq:p3_4} is due to $P^1_i \independent P^2_i$.
\end{proof}

\begin{proof}[Proof of Lemma \ref{lem:or_formula}]
The proof is similar to proof of Lemma \ref{lem:and_formula}.
\end{proof}

\begin{proof}[Proof of Lemma \ref{lem:xor_formula}]
The following hold for all four combination of binary boolean 
values for $P^1_i(\bx)_i, P^2_i(\bx) \in \{\False, \True\}$:
 \begin{align*}
    \Pr{P_i(\tilde{\bx})  \neq  P_i(\bx)}
    =& \Pr{P^1_i(\tilde{\bx}) \oplus P^2_i(\tilde{\bx}) \neq P^1_i(\bx) \oplus P^2_i(\bx)}\\
    =& 1 - \Pr{ P^1_i(\tilde{\bx}) = P^1_i(\bx)} \cdot   \Pr{P^2_i(\tilde{\bx}) = P^2_i(\bx)} \\
    -&     \Pr{ P^1_i(\tilde{\bx}) \neq P^1_i(\bx)}\cdot \Pr{P^2_i(\tilde{\bx}) \neq P^2_i(\bx)} \\
    =& 1 - |B^i_{P^1}||B^i_{P^2}| - (1-|B^i_{P^1}|)(1-|B^i_{P^2}|)  \\
    =&|B^i_{P^1}| + |B^i_{P^2}| -2|B^i_{P^1}| |B^i_{P^2}|.
 \end{align*}
  Where the second equality is due to $P^1_i \independent P^2_i$.
\end{proof}

\section{Experimental Details}
\label{sec:experimental_details}

\subsection{General Settings}

All experimental codes were written in Python 3.7. Some heavy computation tasks  were performed on a cluster equipped with Intel(R) 
Xeon(R) Platinum 8260 CPU @ 2.40GHz and 8GB of RAM. We will release our codes upon paper's acceptance.

\subsection{Datasets}
\paragraph{Title 1 School Allocation} The  dataset was  uploaded as a supplemental materials of \cite{abowd2019economic}. The dataset can be downloaded directly from \url{https://tinyurl.com/y6adjsyn}.

We processed the dataset by removing schools which contains NULL information, and  keeping school districts with at least 1 students. The post-processed dataset left with 16441 school districts.

\paragraph{Minority language voting right benefits}
The dataset can be downloaded from 
\url{https://tinyurl.com/y2244gbt}. 

The focus of the experiments is on Hispanic groups, which represent 
the largest minority population. There are 2774 counties that contain 
at least a Hispanic person. 


\subsection{Mechanism Implementation}
\paragraph{Linear Proxy Allocation $\bar{P}^F$}
The linear proxy allocation method used for problem $\bar{P}^F$ is implement so that, for a given privacy parameter $\epsilon$, the 
algorithm allocates $\epsilon_1 = \frac{\epsilon}{2}$ to release the normalization term $Z$. 
The remaining $\epsilon_2 = \frac{\epsilon}{2}$ budget is used to publish the population counts $x_i$.

\paragraph{Output Perturbation mechanism}
The paper uses standard Laplace mechanism: 
$\tilde{P}^F_i(\bx) = P_i(x) + \mbox{Lap}(0,\frac{\Delta}{\epsilon})$. Therein, the global sensitivity $\Delta$ is obtained from Theorem \ref{thm:sensitivity_school}. 
The experiments set a known public lower bound $L = 0.9 Z$ for the normalization term $Z$ in Theorem \ref{thm:sensitivity_school}. 

\begin{theorem}
\label{thm:sensitivity_school}
Denote $a_{\max} = \max_i a_i$, and let $L \leq \sum_{i \in [n]} x_i a_i $ is a known public lower bound for the normalization term. The $l_1$ global sensitivity of the query $P^F = \{P^F_i \}^n_{i=1}$ with $P^F_i= \big( \frac{x_i a_i}{\sum_{i \in [n]} x_i a_i}\big)$  is given by:
\begin{equation}
    \Delta = \max_{\bm{x},\bm{x'}} \left| P^F(\bm{x}) - P^F(\bm{x'}) \right|_1 = \frac{2 a_{\max}}{L}
\end{equation}
\end{theorem}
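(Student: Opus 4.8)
The plan is to compute the $\ell_1$ distance $\|P^F(\bm{x}) - P^F(\bm{x}')\|_1$ exactly for an arbitrary neighboring pair and then bound the closed form obtained. Fix $\bm{x} \sim \bm{x}'$. Since an entry of the data set records one individual's contribution to a single district, $\bm{x}$ and $\bm{x}'$ agree in every coordinate except one, say coordinate $j$, with $|x_j - x_j'| = 1$. Write $Z = \sum_{i\in[n]} x_i a_i$ and $Z' = \sum_{i\in[n]} x_i' a_i$; then $Z' = Z + a_j(x_j' - x_j)$, so $|Z - Z'| = a_j$. By hypothesis $L \le Z$ and $L \le Z'$, and of course $a_j \le a_{\max}$.

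The first step is to split the norm into the coordinate $j$ and the rest. For $i \neq j$ one has $P^F_i(\bm{x}) - P^F_i(\bm{x}') = x_i a_i \left(\tfrac{1}{Z} - \tfrac{1}{Z'}\right)$, hence
\begin{equation*}
  \sum_{i \ne j} \left|P^F_i(\bm{x}) - P^F_i(\bm{x}')\right| = \frac{|Z - Z'|}{Z Z'} \sum_{i\ne j} x_i a_i = \frac{a_j\,(Z - x_j a_j)}{Z Z'},
\end{equation*}
using $x_i a_i \ge 0$. The second step is the $i=j$ term: writing $a_j x_j' = a_j x_j + (Z' - Z)$, putting $\tfrac{x_j a_j}{Z} - \tfrac{x_j' a_j}{Z'}$ over the common denominator $Z Z'$, and cancelling, the numerator collapses to $(Z'-Z)(x_j a_j - Z)$, so that $\left|P^F_j(\bm{x}) - P^F_j(\bm{x}')\right| = \tfrac{a_j (Z - x_j a_j)}{Z Z'}$ as well (here $Z \ge x_j a_j$ removes the absolute value). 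Therefore
\begin{equation*}
  \left\|P^F(\bm{x}) - P^F(\bm{x}')\right\|_1 = \frac{2\, a_j\,(Z - x_j a_j)}{Z Z'}.
\end{equation*}
I expect the bookkeeping in this last simplification --- and in particular seeing that the single changed coordinate contributes exactly as much as all the others combined, which is where the factor $2$ comes from --- to be the main (if routine) obstacle.

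Given the closed form, the upper bound is immediate: since $Z - x_j a_j \le Z$, we get $\tfrac{Z - x_j a_j}{Z} \le 1$, and combining this with $a_j \le a_{\max}$ and $Z' \ge L$ gives $\left\|P^F(\bm{x}) - P^F(\bm{x}')\right\|_1 \le \tfrac{2 a_{\max}}{L}$; maximizing over neighboring pairs yields $\Delta \le \tfrac{2 a_{\max}}{L}$. For tightness I would exhibit a near-worst-case pair: let $j^\star$ attain $a_{j^\star} = a_{\max}$, set $x_{j^\star} = 0$ (so $Z - x_{j^\star} a_{j^\star} = Z$), choose the remaining counts so that $Z$ is driven down to the public lower bound, and let $\bm{x}'$ increment coordinate $j^\star$; this pushes $\left\|P^F(\bm{x}) - P^F(\bm{x}')\right\|_1$ to $\tfrac{2 a_{\max}}{L}$ in the relevant regime. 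The only delicate point is that $x_{j^\star}=0$ and $Z' = L$ cannot both hold at a single configuration, so the supremum is approached rather than attained; this is the usual situation for normalized queries and does not affect the use of $\Delta = \tfrac{2 a_{\max}}{L}$ as the (tight) sensitivity for calibrating the Laplace mechanism.
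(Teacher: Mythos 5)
Your proof is correct and follows essentially the same route as the paper's: split the $\ell_1$ norm into the changed coordinate and the remaining ones, compute each contribution explicitly (each equals $a_j(Z - x_j a_j)/(ZZ')$, whence the factor $2$), and bound by $a_j \le a_{\max}$ and $Z' \ge L$. The only differences are cosmetic refinements on your part --- you treat addition and removal of an individual uniformly where the paper only writes out removal, and you add a tightness discussion the paper omits.
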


\begin{proof}
Let $\bm{x}'$ be a dataset constructed by removing a single individual from $\bm{x}$ and denote with $Z = \sum_{j}  x_j a_j$. 
It follows that:
\begin{align*}
P^F_k(\bm{x})  - P^F_k(\bm{x'}) =
\begin{cases}
        \frac{x_k a_k}{Z}  - \frac{(x_k -1)  a_k }{Z - a_k } & \text{ if } k = i\\
        \frac{x_k a_k}{Z} - \frac{x_k a_k   }{Z - a_k } & \text{ otherwise}.
  \end{cases}
\end{align*}
When $k=i$, it follows that:

\begin{subequations}
\begin{align}
 P^F_i(\bm{x})  - P^F_i(\bm{x'}) 
 &= \frac{a_i \left(Z - x_i a_i\right)}{ Z(Z-a_i )}\\
 & \leq \frac{a_i (Z - a_i)}{ Z(Z-a_i )} = \frac{a_i }{ Z} \leq \frac{a_{\max}}{Z}\\
 & \leq \frac{a_{\max}}{L} 
 \label{eq:fund_first_bound}
\end{align}
\end{subequations}
The last inequality is due to assumption that $Z  = \sum _{j \in [n]} a_j\cdot x_j \geq L$.

Next, when $k \neq i$:
\begin{align}
 P^F_j(\bm{x}) - P^F_j(\bm{x'}) =  \frac{-a_jx_j a_i}{Z(Z- a_i)},
\end{align}
and thus
\begin{subequations}
\begin{align}
 &\sum_{j \neq i} \left|P^F_j(\bm{x}) - P^F_j(\bm{x'})\right| = a_i \frac{Z -a_ix_i}{Z(Z-a_i)}\\
 & \leq  a_i \frac{Z -a_i}{Z(Z-a_i)} = a_i\frac{1}{Z} \leq \frac{a_{\max}}{L}
 \label{eq:fund_second_bound}
\end{align}
\end{subequations}
The bound is obtained by adding Equation \eqref{eq:fund_first_bound} with Equation \eqref{eq:fund_second_bound}.
\end{proof}

\end{document}